\setlist[enumerate]{noitemsep}
\setlist[itemize]{noitemsep}
\setlist[description]{noitemsep}
\newcommand{\Esse}{S}
\newtheorem{thm}{Theorem}
\newtheorem{lem}{Lemma} 
\newtheorem{prop}{Proposition} 
\theoremstyle{definition} 
\newtheorem{defn}{Definition}
\newtheorem{hyp}{Hypothesis}
\theoremstyle{remark} 
\newtheorem{remark}{Remark}
\newcommand{\N}{\mathbb{N}}
\newcommand{\Z}{\mathbb{Z}}
\newcommand{\R}{\mathbb{R}}
\renewcommand{\epsilon}{\varepsilon} 
\renewcommand{\Im}{\operatorname{Im}}
\DeclareMathOperator{\spann}{span}
\DeclareMathOperator*\comp{\bigcirc}
\def\<{\langle}
\def\>{\rangle}
\newcommand{\email}[1]{\protect\href{mailto:#1}{#1}}
\title{Continuous Generative Neural Networks: \\ A Wavelet-Based Architecture in Function Spaces}
\author{Giovanni S.\ Alberti\thanks{MaLGa Center, Department of Mathematics,  University of Genoa,  Italy (\email{giovanni.alberti@unige.it}, \email{matteo.santacesaria@unige.it}, \email{silvia.sciutto@edu.unige.it}) \\ \textbf{Keywords}: Neural networks, Generative models, Variational autoencoders, Injective networks, Inverse problems, Wavelets, Multi-resolution analysis}
\and Matteo Santacesaria\footnotemark[1] \and Silvia Sciutto\footnotemark[1]
}
\date{}
\begin{document}

\maketitle

\begin{abstract}
In this work, we present and study Continuous Generative Neural Networks (CGNNs), namely, generative models in the continuous setting: the output of a CGNN belongs to an infinite-dimensional function space. The architecture is inspired by DCGAN, with one fully connected layer, several convolutional layers and nonlinear activation functions. In the continuous $L^2$ setting, the dimensions of the spaces of each layer are replaced by the scales of a multiresolution analysis of a compactly supported wavelet. We present conditions on the convolutional filters and on the nonlinearity that guarantee that a CGNN is injective. This theory finds applications to inverse problems, and allows for deriving Lipschitz stability estimates for (possibly nonlinear) infinite-dimensional inverse problems with unknowns belonging to the manifold generated by a CGNN. Several numerical simulations, including signal deblurring, illustrate and validate this approach.
\end{abstract}

\section{Introduction}

Inverse problems (IPs) naturally arise in various applications within imaging and signal processing, encompassing tasks like denoising, deblurring, superresolution, inpainting in computer vision, as well as computed tomography (CT), magnetic resonance imaging (MRI), positron emission tomography (PET), electrical impedance tomography (EIT) in the medical domain. Unlike traditional forward problems, they suffer from the so-called ill-posedness, namely the lack of existence, uniqueness or stability of the solution \cite{Engl}. While existence and uniqueness can be guaranteed by relaxing the inverse problem and selecting a solution with particular properties, achieving stability presents a considerably greater challenge.

A classical approach to address ill-posedness consists in utilizing a regularization functional for stabilizing the reconstruction process. In recent years, machine learning based reconstruction algorithms have become the state of the art in most imaging applications \cite{AMO19,Dimakis}. Among these algorithms, the ones combining generative models with classical iterative methods -- such as the Landweber scheme -- are very promising since they retain most of the \textit{explanaibility} provided by inverse problems theory. However, despite the impressive numerical results \cite{IPwithDL, ulyanov2018deep, AKR19, Seo1,Seo, asim2020blind, Seo2}, many theoretical questions have not been studied yet, for instance concerning stability properties of the reconstruction. One of the main contributions of this work consists in showing Lipschitz stability guarantees for infinite-dimensional IPs whose unknown belongs to the image of a specific deep generative model.

In order to obtain the stability result, we design a new class of generative models which act as a prior for the IP. In particular, we develop a generalization of a convolutional-type generator to a continuous setting, where the samples to be generated belong to an infinite-dimensional function space. This is motivated by the fact that in many IPs the unknows are physical quantities that are better modeled as functions than as vectors, e.g.\ the parameters or the solutions to partial differential equations (PDEs). As a result, working directly in an infinite-dimensional setting allows us to avoid discretization errors, a source of instabilities in ill-posed problems (see, e.g., \cite{kaipio-somersalo-2007,stuart-2010}). This aspect of the work is tightly related to the growing research area of neural networks in infinite-dimensional spaces, often motivated by the study of PDEs, which includes Neural Operators \cite{NN_fra_spazi_funzioni}, Deep-O-Nets \cite{DeepONet}, PINNS \cite{PINN} and many others. The general goal of these works is to approximate an operator between infinite-dimensional function spaces (e.g.\ the parameter-to-solution map of a PDE) with a neural network that does not depend on the discretization of the domain. 
In the context of inverse problems, there are several super-resolution method able to produce a continuous representation of an image  \cite{chen2021,dupont2021,sitzmann2020,skorokhodov2021,tancik2020,habring2022, habring2022generative}. In particular, in \cite{habring2022} the authors consider generative convolutional neural networks in function spaces, but without incorporating the concepts of strided convolution, including upscaling and downscaling, typical of discrete convolutional networks. Further, none of these approaches is based on the wavelet decomposition, the tool used in the present work, which naturally deals with continuous signals.

In this work (Section~\ref{sec:arch}), we introduce a family of continuous generative neural networks (CGNNs), mapping a finite-dimensional space into an infinite-dimensional function space. Inspired by the architecture of deep convolutional GANs (DCGANs) \cite{DCGAN}, CGNNs are obtained by composing an affine map with several (continuous) convolutional layers with nonlinear activation functions. The convolutional layers are constructed as maps between the subspaces of a multi-resolution analysis (MRA) at different scales, and naturally generalize discrete convolutions. In our continuous setting, the scale parameter plays the role of the resolution of the signal/image.  We note that wavelet analysis has been used in the design of deep learning architectures in the last decade \cite{M12,Mallat_scattering_network,anden2014deep, Deep_Haar_scattering, GNscattering}. 

The main result of this paper (Section~\ref{sec:injectivity}) is a set of sufficient conditions that the parameters of a CGNN must satisfy in order to guarantee global injectivity of the network. This result is far from trivial because in each convolutional layer the number of channels is reduced, and this has to be compensated by the higher scale in the MRA. Generative models that are not injective are of no use in solving inverse problems or inference problems, or at least it is difficult to study their performance from the theoretical point of view. In the discrete settings, some families of injective networks have been already thoroughly characterized \cite{ IRN,Layerwise_inversion,injective_relu, iUnets, Trumpets,  Universal_Joint_approximation,hagemann-neumayer-2021,hagemann2021stochastic}. Note that normalizing flows are injective by construction, yet they are maps between spaces of the same (generally large) dimension, a feature that does not necessarily help with our desired applications.

Indeed, another useful property of CGNNs is dimensionality reduction. For ill-posed inverse problems, it is well known that imposing finite-dimensional priors improves the stability and the quality of the reconstruction \cite{AV05,BFV14,BDF16,ADG16,BFV21}, also working with finitely-many measurements \cite{AS19,H19,AS21,AS22,Alberti_Santacesaria_Arroyo}. In practice, these priors are unknown  or cannot be analytically described: yet, they can be approximated by a (trained) CGNN. The second main result of this work (Section~\ref{sec:deep_land}) is that an injective CGNN allows us to transform  a possibly nonlinear ill-posed inverse problem into a Lipschitz stable one. 
Our stability estimate in Theorem~\ref{lipschitz_stab} is tightly connected to the works on compressed sensing for generative models (e.g.\ \cite{IPwithDL,berk-brugiapaglia-etal-2022}), because they both deal with stability for inverse problems under the assumption that the unknown lies in the image of a generative model. However, in our setup the model is infinite-dimensional, the forward map is possibly nonlinear, and its inverse may not be continuous even with full measurements.

As a proof-of-concept (Section~\ref{sec:num}), we show numerically the validity of CGNNs in performing signal deblurring on a class of one-dimensional smooth signals. The numerical model is obtained by training a VAE whose decoder is designed with an injective CGNN architecture. The classical Landweber iteration method is used as a baseline to compare CGNNs derived from different orthogonal wavelets and the correspondent discrete generative neural network (NN). We also provide some qualitative experiments on the expressivity of CGNNs. However, we would like to emphasize that the main contributions of this paper are theoretical, and that the main goal of our experiments is not to obtain
state of the art results, but only to compare continuous and discrete generative neural networks for a toy class of smooth signals. The application to real-world data and the combination with other methods in order to achieve competitive results are not within the scope of this paper and are left to future research.
We mention that, after the first version of this work was published online, several works have considered generative models with outputs in function spaces, see \cite{khorashadizadeh2022funknn,scherzer2023newton,hagemann2023multilevel, pidstrigach2023infinite, lim2023score,raonic2023convolutional}.

\section{Architecture of CGNNs}\label{sec:arch}
We first review the architecture of a DCGAN \cite{DCGAN}, and then present our continuous generalization. For simplicity, the analysis is done for $1$D signals, but it  can be extended to the $2$D case (see Appendix~\ref{sec:extension}).

\subsection{1D discrete generator architecture}\label{sub:1Ddiscrete}
A deep generative model can be defined as a map $G_\theta \colon \R^\Esse \to X$, where $X$ is a finite-dimensional space with $\Esse \leq \dim(X)$, constructed as the forward pass of a neural network with parameters $\theta$. Our main motivation being the use of generators in solving ill-posed inverse problems, we consider generators that allow for a dimensionality reduction, i.e.\ $\Esse \ll \dim(X)$, which will yield better stability in the reconstructions.

As a starting point for our continuous architecture, we then consider the one introduced in \cite{DCGAN}. It is a map $G\colon \R^S \to X$ (we drop the dependence on the parameters $\theta$) obtained by composing an affine fully connected (f.c.) layer and $L$ convolutional layers with nonlinear activation functions. More precisely:
\begin{multline*}
G \colon \mathbb{R}^{\Esse}  \xrightarrow[\text{f.c.}]{\Psi_1} (\mathbb{R}^{\alpha_1})^{c_1} \xrightarrow[\text{nonlin.}]{\sigma_1} (\mathbb{R}^{\alpha_1})^{c_1} \xrightarrow[\text{conv.}]{\Psi_2} (\mathbb{R}^{\alpha_2})^{c_2} \xrightarrow[\text{nonlin.}]{\sigma_2} (\mathbb{R}^{\alpha_2})^{c_2} \xrightarrow[\text{conv.}]{\Psi_3}  \cdots \\ 
\cdots \xrightarrow[\text{conv.}]{\Psi_{L}}  \mathbb{R}^{\alpha_{L}} \xrightarrow[\text{nonlin.}]{\sigma_{L}} \mathbb{R}^{\alpha_{L}}=X,
\end{multline*}
which can be summarized as
\begin{equation}\label{eq:G}
    G =  \comp_{l=L}^1 \left(\sigma_{l} \circ \Psi_l\right)
    .
\end{equation}

\begin{figure}
	\centering 
    \includegraphics[scale=1]{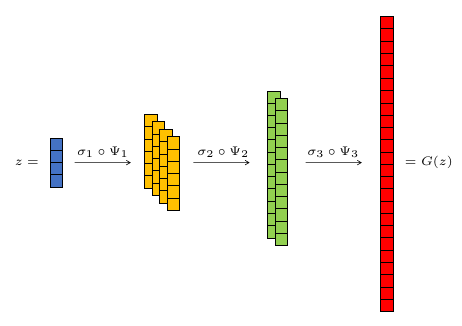}
    \caption{Example of discrete generator's architecture. The first fully connected layer maps the latent space $\R^4$ into a $4$ channel space of vectors of length $6$. Then there are two convolutional layers with stride $\frac{1}{2}$ which halve the number of channels and double the length of the vectors, until obtaining one vector of length $24$. In this example, the increase in dimensionality occurs only in the first (fully-connected) layer, while in the following convolutional layers, the dimension of the spaces remains constant, because the increase in resolution is compensated by the reduction of the number of channels.}
    \label{generator}
\end{figure}

The natural numbers $\alpha_1,\dots,\alpha_L$ are the vector sizes and represent the resolution of the signals at each layer, while $c_1,\dots,c_L$ are the number of channels at each layer. The output resolution is $\alpha_L$. Generally, one has $\alpha_1< \alpha_2 < \dots < \alpha_L$, since the resolution increases at each level. Moreover, we impose that $\alpha_{l}$ is divisible by $\alpha_{l-1}$ for every $l=2,\dots,L$. We now describe the components of $G$.

\paragraph{The nonlinearities} 
Each layer includes a pointwise nonlinearity $\sigma_{l} \colon (\mathbb{R}^{\alpha_{l}})^{c_{l}} \to (\mathbb{R}^{\alpha_{l}})^{c_{l}}$, i.e.\ a map defined as 
\begin{equation*}
\sigma_{l}(x_1,\dots,x_{\alpha_{l} \cdot c_{l}}) = (\sigma(x_1),\dots,\sigma(x_{\alpha_{l} \cdot c_{l}})), 
\end{equation*}
with $\sigma \colon \mathbb{R} \to \mathbb{R}$ nonlinear.

\paragraph{The fully connected layer} 
The first layer is $\Psi_1 := F\cdot +b$, where $F\colon\mathbb{R}^\Esse\to (\R^{\alpha_1})^{c_1}$ is a linear map and $b\in (\R^{\alpha_1})^{c_1}$ is a bias term.

\paragraph{The convolutional layers}
The convolutional layer $\Psi_{l} \colon (\mathbb{R}^{\alpha_{l - 1}})^{c_{l - 1}} \to (\mathbb{R}^{\alpha_{l}})^{c_{l}}$ represents a fractional-strided convolution with stride $s = \frac{\alpha_{l - 1}}{\alpha_{l}}$ such that $s^{-1} \in \mathbb{N}^*$, where $c_{l - 1}$ is the number of input channels and $c_{l}$ the number of output channels, with  $c_{l}<c_{l-1}$. This convolution with  stride $s$  corresponds to the transpose of the discrete convolution with stride $s^{-1}$, and is often called deconvolution. It is defined by
\begin{equation*}
    (\Psi_{l} x)_k := \sum_{i=1}^{c_{l-1}} x_i \ast_{s} t^{l}_{i,k} + {b}^{l}_k,\qquad k = 1,...,c_{l},
\end{equation*}
where $t^l_{i,k} \in \mathbb{R}^{\alpha_{l}}$ are the convolutional filters and ${b}^l_k \in \mathbb{R}^{\alpha_{l}}$  are the bias terms, for $i=1,...,c_{l-1}$ and $k=1,...,c_{l}$.
The operator $\ast_s$ is defined as
\begin{equation}
\label{deconv_discr}
(x \ast_{s} t) (n) := \sum_{m \in \Z} x(m) \hspace{0.1cm} t(n-s^{-1}m),
\end{equation}
where we extend the signals $x$ and $t$ to finitely supported sequences by defining them  zero outside their supports, i.e.\ $x,t \in c_{00}(\Z)$, where $c_{00}(\Z)$ is the space of sequences with finitely many nonzero elements. 
We refer to \ref{proof_adjoint} for more details on fractional-strided convolutions.

Note that the most significant dimensional increase occurs in the first layer, the fully connected one. Indeed, after the first layer, in the fractional-strided convolutional layers the increase of the vectors' size is compensated by the decrease of the number of channels; see Figure~\ref{generator} for an illustration. At each layer, the resolution of a signal increases thanks to a deconvolution with higher-resolution filters, as we explain in detail in \ref{proof_adjoint}. The final output is then a single high-resolution signal.

\subsection{1D CGNN architecture}
\label{sec:1D_continuous_generator_architecture}
We now describe how to reformulate this discrete architecture in the continuous setting, namely, by considering signals in $L^2(\mathbb{R}) := \{ f \colon \R \to \R \text{ Lebesgue measurable} \text{ s.t. } \int_\R f^2(x) dx < + \infty \} \mathrel{/} \sim$, where $f \sim g$ if and only if $f-g = 0$ a.e. The resolution of these continuous signals is modelled through wavelet analysis. Indeed, the higher the resolution of a signal, the finer the scale of the space to which the signal belongs. The idea to link multi-resolution analysis to neural networks is partially motivated by scattering networks \cite{Mallat_scattering_network}.

\paragraph{Basic notions of wavelet theory}
We give a brief review of concepts from wavelet analysis: in particular the definitions and the meaning of scaling function spaces and Multi-Resolution analysis in the $1$D case; for the $2$D case we refer to \ref{sec:2dwave}. See \cite{Daubechies,Hernandez,Mallat} for more details.

Given a function $\phi \in L^2(\mathbb{R})$, we define 
\begin{equation}
\label{phi_j_n}
\phi_{j,n}(x) = 2^{\frac{j}{2}} \phi (2^{j} x-n), \hspace{0.5cm} x \in \mathbb{R},
\end{equation}
for every $j,n \in \mathbb{Z}$. The integers $j$ and $n$ are the scale and the translation parameters, respectively, where the scale is proportional to the speed of the oscillations of $\phi$ (the larger $j$, the finer the scale, the faster the oscillations).

\begin{defn}
	\label{MRA}
	A \emph{Multi-Resolution Analysis (MRA)} is an increasing sequence of subspaces $\{V_j \} \subset L^2(\mathbb{R})$ defined for $j \in \mathbb{Z}$
	\begin{equation*}
	... \subset V_{-1} \subset V_0 \subset V_1 \subset ...
	\end{equation*}
	together with a function $\phi \in L^2(\mathbb{R})$ such that
	\begin{enumerate}
		\item $\displaystyle \bigcup_{j \in \mathbb{Z}} V_j$ is dense in $L^2(\mathbb{R})$ and $\displaystyle \bigcap_{j \in \mathbb{Z}} V_j = \{ 0 \}$;
		\item $f \in V_j$ if and only if $f(2^{-j} \cdot) \in V_0$;
		\item and $\{ \phi_{0,n} \}_{n \in \mathbb{Z}} = \{ \phi(\cdot-n) \}_{n \in \mathbb{Z}}$ is an orthonormal basis of $V_0$.
	\end{enumerate}
    The function $\phi$ is called \emph{scaling function} of the MRA.
\end{defn}
Intuitively, the space $V_j$ contains functions for which the finest scale is $j$.

\paragraph{The spaces} In the discrete formulation, the intermediate spaces $\mathbb{R}^{\alpha_1},\dots,\mathbb{R}^{\alpha_L}$, with $\alpha_1<\dots<\alpha_L$, describe vectors of increasing resolution. In the continuous setting, it is natural to replace these spaces by using a MRA of $L^2(\mathbb{R})$, namely, by using the spaces
\[
V_{j_1}\subset V_{j_2}\subset\cdots\subset V_{j_L},
\]
with $j_1<\dots<j_L$, representing an increasing (finite) sequence of scales. We have that $f\in V_j$ if and only if $f(2\,\cdot)\in V_{j+1}$, so that $V_{j+1}$ contains signals at a resolution that is twice that of the signals in $V_j$. Thus, the relation between the indexes $\alpha_l$ and $j_l$ is
\[
\alpha_{l}=2^{\nu} \alpha_{l-1} \iff j_{l}=\nu + j_{l-1},
\]
where $\nu \in \N$ is a free parameter, or, equivalently,
\begin{equation}
\label{stride_e_j}
j_l - j_{l-1} =\log_2\frac{\alpha_l}{\alpha_{l-1}} = \log_2 (s^{-1}),
\end{equation}
where $s=2^{-\nu}$ is the stride of the deconvolution (discussed above in the discrete setting, and defined below in the continuous setting).
Similarly to the discrete case, the intermediate spaces are $(V_{j_l})^{c_l}$ for $l=1,...,L$, with $c_1>\dots>c_L$. 
The norm in these spaces is
\begin{equation*}
  \| f \|_2^2 = \sum_{i=1}^{c_l} \| f_i \|_{L^2(\R)}^2 = \sum_{i=1}^{c_l} \int_{\R} |f_i(x)|^2 dx, \qquad f \in (V_{j_l})^{c_l}.
\end{equation*}

\paragraph{The nonlinearities} 
The nonlinearities $\sigma_l$ act on functions in $(L^2(\mathbb{R}))^{c_{l}}$ by pointwise evaluation:
\begin{equation}
    \label{sec:non_lin}
    \sigma_l(f)(x) = \sigma_l(f(x)),\qquad \text{a.e.\ $x\in \mathbb{R}$}.
\end{equation}
Note that this map is well defined if there exists $L_l$ such that $|\sigma_l(x)| \leq L_l |x|$ for every $x \in \R$. Indeed, in this case, $\sigma_l(f) \in L^2(\R)$ for $f \in L^2(\R)$. Moreover, if $f=g$ a.e., then $\sigma_{l}(f)=\sigma_{l}(g)$ a.e.\ It is worth observing that, in general, this nonlinearity does not preserve the spaces $(V_{j_{l}})^{c_{l}}$, namely, $\sigma_{l} ((V_{j_{l}})^{c_{l}})\not\subset (V_{j_{l}})^{c_{l}}$. However, in the case when the MRA is associated to the Haar wavelet, the spaces $(V_{j_{l}})^{c_{l}}$ consist of dyadic step functions, and so they are preserved by the action of $\sigma_{l}$. 

\paragraph{The fully connected layer}
The map in the first layer is given by
\begin{equation}\label{eq:Psi1}
    \Psi_1=F\cdot +b,
\end{equation}
where $F\colon\mathbb{R}^\Esse\to (V_{j_1})^{c_1}$ is a linear map and $b\in (V_{j_1})^{c_1}$.

\paragraph{The convolutional layers}  We first need to model the convolution in the continuous setting. A convolution with stride $s=2^\nu$ that maps functions from the scale $j + \nu$ to the scale $j$ with filter $g\in V_{j + \nu}\cap L^1(\R)$ can be seen as the map
\[
\cdot *_{j+\nu\to j} g \colon L^2(\mathbb{R})\to L^2(\mathbb{R}),\qquad
f *_{j+\nu\to j} g = P_{V_{j}}(P_{V_{j+\nu}}f* g),
\]
where $*\colon L^2(\R)\times L^1(\R)\to L^2(\R)$ denotes the continuous convolution and $P_V\colon L^2(\mathbb{R})\to L^2(\mathbb{R})$ denotes the orthogonal projection onto the closed subspace $V\subset L^2(\mathbb{R})$. In other words,
\begin{equation}
\label{cont_strid_conv}
\cdot *_{j+\nu\to j} g = P_{V_{j}} \circ (\,\cdot*g) \circ P_{V_{j+\nu}}.    
\end{equation}
The orthogonal projections allow us to fix the desired input and output spaces for the continuous strided convolution, analogously to the discrete case (see Figures~\ref{fig_strides} and \ref{stride_1_2_real} in Appendix~\ref{proof_adjoint}). Indeed, through a convolution with stride $s=2^\nu$, a high resolution signal (scale $j+\nu$) is mapped to a lower resolution signal (scale $j$).

We define the convolution with stride $s = 2^{- \nu}$ (also called deconvolution) with filter $g \in V_{j + \nu}\cap L^1(\R)$ as
\begin{equation}
\label{j_to_j+nu}
  \cdot *_{j\to j+\nu} g = P_{V_{j+\nu}} \circ (\,\cdot*g) \circ P_{V_{j}}\colon L^2(\mathbb{R})\to L^2(\mathbb{R}).  
\end{equation}
This is the adjoint of the convolution with stride $2^{\nu}$, which can be easily computed since projections are self-adjoint and the adjoint of a convolution with filter $g$ is a convolution with filter $\tilde{g}(x) := g(-x)$. Therefore, by renaming $\tilde{g}$ with $g$, we obtain \eqref{j_to_j+nu}.

We are now able to model a convolutional layer. The $l$-th layer of a CGNN, for $l\geq 2$, is
\[
\sigma_l\circ\bar\Psi_l \colon (L^2(\mathbb{R}))^{c_{l-1}}\to (L^2(\mathbb{R}))^{c_{l}},
\]
where $\sigma_{l}$ is the nonlinearity defined above and $\bar\Psi_l$ are the convolutions with stride $2^{-\nu}$. In view of the above discussion, and of the discrete counterpart explained in Section~\ref{sub:1Ddiscrete}, we define
\begin{equation*}
  \bar\Psi_l = P_{(V_{j_{l}})^{c_{l}}}\circ \Psi_{l} \circ P_{(V_{j_{l-1}})^{c_{l-1}}},
\end{equation*}
where the convolution $ \Psi_l \colon (L^2(\mathbb{R}))^{c_{l-1}}\to (L^2(\mathbb{R}))^{c_{l}}$ is given by
\begin{equation}
\label{psi_ell}
  (\Psi_{l} (x))_k := \sum_{i=1}^{c_{l-1}} x_i * t^{l}_{i,k} + {b}^{l}_k,\qquad k = 1,...,c_{l},
\end{equation}
with filters $t^{l}_{i,k}\in V_{j_{l}}\cap L^1(\R)$ and biases ${b}^{l}_k\in V_{j_{l}}$. 

\paragraph{Summing up} 
Altogether, the full architecture in the continuous setting may be written as
\begin{multline*}
G \colon \mathbb{R}^{\Esse}  \xrightarrow[\text{f.c.}]{\Psi_1} (V_{j_1})^{c_1} \xrightarrow[\text{nonlin.}]{\sigma_1} (L^2(\mathbb{R}))^{c_1} \xrightarrow[\text{proj.}]{P_{(V_{j_1})^{c_1}}} (V_{j_1})^{c_1} \xrightarrow[\text{conv.}]{\Psi_2} (L^2(\mathbb{R}))^{c_2} \\  \xrightarrow[\text{proj.}]{P_{(V_{j_2})^{c_2}}} (V_{j_2})^{c_2} 
\xrightarrow[\text{nonlin.}]{\sigma_2} (L^2(\mathbb{R}))^{c_2} \xrightarrow[\text{proj.}]{P_{(V_{j_2})^{c_2}}} (V_{j_2})^{c_2} \xrightarrow[\text{conv.}]{\Psi_3} \cdots \\  \cdots \xrightarrow[\text{conv.}]{\Psi_{L}} L^2(\mathbb{R}) \xrightarrow[\text{proj.}]{P_{V_{j_L}}} V_{j_L}
\xrightarrow[\text{nonlin.}]{\sigma_{L}} L^2(\mathbb{R}) \xrightarrow[\text{proj.}]{P_{V_{j_L}}} V_{j_L},
\end{multline*}
which can be summarized as
\begin{equation}
\label{continuous_generator}
    G= \left( \comp_{l=L}^2 \tilde{\sigma}_{l} \circ \tilde\Psi_l\right)
    \circ \left(\tilde{\sigma}_1 \circ \Psi_1\right),
\end{equation}
where
\begin{equation}
\label{tilde_psi}
\tilde{\Psi}_{l} := P_{(V_{j_{l}})^{c_{l}}} \circ \Psi_{l}\colon (V_{j_{l-1}})^{c_{l-1}}\to (V_{j_{l}})^{c_{l}}, \qquad l=2,...,L,   
\end{equation}
and
\begin{equation}
\label{tilde_sigma}
\tilde{\sigma}_{l} := P_{(V_{j_{l}})^{c_{l}}} \circ \sigma_l \colon (V_{j_{l}})^{c_{l}} \to (V_{j_{l}})^{c_{l}}, \qquad l=1,...,L.  
\end{equation}
\begin{remark}[On the finite-dimensionality of the network $G$]
Even though the scale $j$ is fixed,  the spaces $V_j$ in \eqref{tilde_psi} and \eqref{tilde_sigma} are infinite-dimensional because of the infinite translations in \eqref{phi_j_n}. However, when restricting to functions with a fixed compact support (as is done in practice, and as we will do below), every layer contains maps between finite-dimensional spaces. On the one hand, this will allow for a relatively simple implementation of the network, similarly to a discrete neural network (see $\S$\ref{implementation} below). On the other hand, this architecture avoids any further arbitrary (e.g.\ pixel-based) discretization, yielding better results for analog signals. This aspect is related to the discretization issue in operator learning, see \cite{bartolucci2023representation}.
\end{remark}

\begin{remark}[Idea behind the continuous strided convolution]
Let us focus on the case with stride $s=2$ for simplicity. The cases with $s=2^\nu$ with $\nu\ge 2$ are analogous, while the corresponding deconvolutions ($s=2^\nu$ with $\nu\le -1$) are simply obtained by taking the adjoint operator, as the discrete deconvolution is obtained by taking the transpose of the convolution. The discrete convolution with stride $s = 2$ (see equation \eqref{conv_discr} in \ref{proof_adjoint} for more details) is
obtained by
\begin{enumerate}
    \item taking a standard discrete convolution;
    \item and by keeping only every second entry of the resulting vector.
\end{enumerate}
As a consequence, the resolution of the output vector is half  that of the input vector (ignoring boundary effects).

Our definition of the continuous strided convolution \eqref{cont_strid_conv} generalizes these operations. If we start with an input signal $f \in V_{j+1}$ and a filter $g\in V_{j+1}$, the resulting convolution is $P_{V_j}(f*g)$, namely we
\begin{enumerate}[label=(\roman*)]
    \item take a continuous convolution $f * g$;
    \item and project the resulting signal onto $V_{j}$.
\end{enumerate}
Here, (i) is the natural continuous version of 1. In (ii), the projection onto $V_j$ consists of local averages, which correspond to step $2$., where, instead of taking averages, only every second entry of the output vector was kept. Further, the input vector belongs to $V_{j+1}$ and the output vector to $V_j$, and so the resolution of the latter is half  that of the former, as in the discrete case. Finally, in order to define the convolution on the whole space $L^2(\R)$, the input vector is first projected onto $V_{j+1}$.
\end{remark}


The following lemma is useful to understand the relation between discrete and continuous strided convolutions.

\begin{lem}
\label{formula_compressa_conv_proj}
Given a function $f =\sum_{n \in \mathbb{Z}} c_n \phi_{j,n}\in V_j$, then 
\begin{equation*}
 f *_{j\to j+\nu} g = \sum_{m \in \Z} \langle  f \ast g, \phi_{j+\nu,m} \rangle_2 \phi_{j+\nu,m},  
\end{equation*}
where $g = \sum_{p \in \mathbb{Z}} d_p \phi_{j+\nu,p}\in V_{j+\nu} \cap L^1(\R)$, represents a filter, and 
\begin{equation}\label{conv_proj}
  \langle  f \ast g, \phi_{j+\nu,m} \rangle_2  = 2^{-\frac{j}{2}} \sum_{n \in \mathbb{Z}} c_n d_\eta (m-2^{\nu}n) = 2^{-\frac{j}{2}} \big(c \ast_{\frac{1}{2^\nu}} d_\eta \big)(m),
\end{equation}
where $\ast_s$ is the discrete strided convolution with stride $s$ defined in \eqref{deconv_discr}, $d_\eta := d \ast_1 \eta_\nu$ and $\eta_\nu$ is the sequence defined as
\begin{equation}
\label{eta}
  \eta_\nu(r) := \int_{\R^2} \phi(t) \phi(z) \phi(z-2^\nu t+r) dz dt, \qquad r \in \Z.
\end{equation}
\end{lem}
\begin{proof}
We first prove that $\langle \phi_{j,n} \ast \phi_{j+\nu,p}, \phi_{j+\nu,m} \rangle_2$ depends only on the scaling function $\phi$, the scale $j$ and the coefficient $m-p-2^\nu n$. We have
\begin{align}\notag
\langle \phi_{j,n} \ast \phi_{j+\nu,p} , \phi_{j+\nu,m} \rangle_2 & = \int_{\mathbb{R}^2} \phi_{j,n}(y) \phi_{j+\nu,p}(x-y) \phi_{j+\nu,m}(x) dx dy \\ &  = \int_{\mathbb{R}^2} 2^{\frac{j}{2}} \phi\Big(2^{j} y - n\Big) 2^{\frac{j+\nu}{2}} \phi(2^{j+\nu}(x-y)-p) 2^{\frac{j+\nu}{2}} \phi(2^{j+\nu} x-m) dx dy  \notag \\ &= 2^{-\frac{j}{2}} \int_{\mathbb{R}^2} \phi(t) \phi(z) \phi(z-2^\nu t+m-2^\nu n-p) dz dt  \notag \\ & = 2^{-\frac{j}{2}} \hspace{0.05cm} \eta_\nu(m-p-2^\nu n),
\label{eta_compatto}
\end{align}
where $\phi_{j,n} = 2^{\frac{j}{2}} \phi(2^{j} \cdot - n) $ and $\eta_\nu(r) = \int_{\R^2} \phi(t) \phi(z) \phi(z-2^\nu t+r) dz dt$, as defined in \eqref{phi_j_n} and \eqref{eta}, respectively. We obtain
\begin{equation*}
\begin{split}
  \langle f \ast g, \phi_{j+\nu,m} \rangle_2 & = \sum_{n \in \mathbb{Z}} \sum_{p \in \mathbb{Z}} c_n d_p \langle \phi_{j,n} \ast \phi_{j+\nu,p}, \phi_{j+\nu,m} \rangle_2 \\&
  = 2^{-\frac{j}{2}} \sum_{n \in \mathbb{Z}} c_n \sum_{p \in \mathbb{Z}} d_p \eta(m-p-2^\nu n) \\ &
  = 2^{-\frac{j}{2}} \sum_{n \in \mathbb{Z}} c_n d_\eta (m-2^{\nu}n) \\ & = 2^{-\frac{j}{2}} \big(c \ast_{\frac{1}{2^\nu}} d_\eta \big)(m),
\end{split}
\end{equation*}
where $\ast_s$ is the discrete strided convolution with stride $s$ defined in \eqref{deconv_discr}, $d_\eta := d \ast_1 \eta_\nu$ and $\eta_\nu(r)$ is the sequence defined in \eqref{eta}.
\end{proof}

Therefore, the coefficients of the output signal (with respect to $\{\phi_{j+\nu,m}\}$) are obtained by taking a discrete strided convolution of the coefficients $c$ of the input signal (with respect to $\{\phi_{j,m}\}$)  with the (discrete) filter $d_\eta$, which is obtained by taking a discrete convolution of the coefficients of the filter $g$ with $\eta_\nu$.
In other words, a continuous strided convolution between two signals can be seen as a discrete strided convolution between the corresponding coefficients in the natural basis. 

\paragraph{A simple example: the Haar case}
Let $V_j$ be the spaces of the MRA associated to the Haar scaling function $\phi = \mathbbm{1}_{[0,1]}$. This simple setting naturally extends the discrete case to the continuous one. Indeed, given a signal $f\in V_j$, namely a piecewise constant function on dyadic intervals, its coefficients with respect to the family $\{\phi_{j,m}\}$ are the values of $f$ itself (up to a normalization factor), and so discrete and continuous convolutions almost coincide (see \eqref{conv_proj}). The only difference being in  the filter, since in this case $\eta_1 = [...,0,0.25,0.5,0.25,0,...] \in \R^{\mathbb{Z}}$, and so $d\neq d_\eta$. 
We would have exact correspondence  if $\eta$ were the Dirac delta, i.e.\ $\eta  = [...,0,0,1,0,0,...] \in \R^{\mathbb{Z}}$ (but this cannot be obtained with any choice of wavelet), so that $d_\eta=d$. Moreover, the Haar scaling function makes it possible to simplify the structure of a CGNN. Indeed $\sigma_{l} (V_{j_{l}})\subset V_{j_{l}}$, thanks to the form of $\phi$ and the fact that $\phi_{j_{l},k_1}$ and $\phi_{j_{l},k_2}$ have disjoint support for every $k_1 \neq k_2$. Therefore, in this setting, the projections after the nonlinearities can be removed.

\subsection{Details on the implementation of the network}
\label{implementation}

In order to implement our generator \eqref{continuous_generator}, which is written as a composition of maps between infinite-dimensional spaces, $\tilde\Psi_l$ and $\tilde\sigma_l$, we need to find a proper discretization of the network, ideally avoiding fine discretizations of the space. 

\paragraph{Implementation of $\tilde\Psi_l$} 
The continuous strided convolution $\tilde\Psi_l$, namely a continuous convolution followed by the projection, may be efficiently computed by using \eqref{conv_proj},  if we set the bias term to be zero for each output channel.
Indeed, we require only one computation of a continuous integral (i.e.\ the integral that defines $\eta_\nu$ in \eqref{eta}, which depends only on the choice of the wavelet) and a series of discrete convolutions. In the case when the bias term is not trivial, this can be dealt with directly at the level of the wavelet coefficients.

\paragraph{Implementation of $\tilde\sigma_l$} 
The computation of $\tilde\sigma_l$, i.e.\ the nonlinearity followed by the projection, is more subtle (apart in the case of the Haar wavelet, where it is enough to apply the nonlinearity to the scaling coefficients).
The most straightforward implementation would be to consider a fine discretization of the space, which would allow for the computation of the pointwise nonlinearity and of the integrals involved in the scalar products related to the projection. However, this  would be computationally heavy.
Instead, we propose to approximate the pointwise values of a signal belonging to a certain $V_j$ by using its scaling coefficients in $V_{j+M}$ with $M$ sufficiently large. We apply the nonlinearity directly to these coefficients, and then we project back to $V_j$. Note that going from $V_j$ to $V_{j+M}$ and back can be  performed  very efficiently by using the fast wavelet transform, and the operations of upsampling and downsampling (see \cite[Section $5.6$]{Daubechies}).

We now clarify how the scaling coefficients in $V_{j+M}$ for large $M$ provide a (pointwise) discretization of the original signal in $V_j$. We have 
\begin{equation}
\label{scal_coeff_e_funz}
    \lim_{j \to +\infty} \frac{2^{\frac{j}{2}} \langle f, \phi_{j,2^{j} b} \rangle_2}{\int_{\R} \phi\,dx} = f(b), \qquad \text{a.e. } b \in \R, 
\end{equation} 
whenever $f \in L^2(\R)$ and $\phi$ is compactly supported, bounded and $\int_{\R} \phi\,dx \neq 0$. If $f$ is continuous, equation~\eqref{scal_coeff_e_funz} holds for every $b \in \R$.
In other words, the scaling coefficients at fine scales approximate the pointwise values of the function, up to a constant. 
In the Haar case, equation~\eqref{scal_coeff_e_funz} is a simple consequence of the Lebesgue differentiation theorem, while in the general case, it follows from an extension (see \cite[Corollary 2.1.19]{grafakos2008classical}).

\paragraph{Summing up} 
Altogether, the implementation of the continuous network may be written as
\begin{multline*}
G \colon \mathbb{R}^{\Esse}  \xrightarrow[\text{f.c.}]{\Psi_1^\mathrm{w}} (\mathbb{R}^{\alpha_1})^{c_1} \xrightarrow[\text{nonlin.}]{\sigma_1^\mathrm{w}} (\mathbb{R}^{\alpha_1})^{c_1} \xrightarrow[\text{conv.}]{\Psi_2^\mathrm{w}} (\mathbb{R}^{\alpha_2})^{c_2} \xrightarrow[\text{nonlin.}]{\sigma_2^\mathrm{w}} (\mathbb{R}^{\alpha_2})^{c_2} \xrightarrow[\text{conv.}]{\Psi_3^\mathrm{w}}  \cdots \\ 
\cdots \xrightarrow[\text{conv.}]{\Psi_{L}^\mathrm{w}}  \mathbb{R}^{\alpha_{L}} \xrightarrow[\text{nonlin.}]{\sigma_{L}^\mathrm{w}} \mathbb{R}^{\alpha_{L}}\xrightarrow[\text{synth.}]{S_L} V_{j_L},
\end{multline*}
where:
\begin{itemize}
    \item the map $\Psi_1^\mathrm{w}$ represents each component of $\Psi_1$ with respect to an orthonormal system  $\{\phi_{j_1,n}\}_n$ of $V_{j_1}$ (we restrict to a fixed compact support, so that only $\alpha_1$ indices $n$ are considered);
    \item as for $\Psi_1^\mathrm{w}$, for $l=2,\dots,L$, the map $\Psi_l^\mathrm{w}$ is the map $\tilde\Psi_l\colon (V_{j_{l-1}})^{c_{l-1}}\to (V_{j_{l}})^{c_{l}}$ written with respect to suitable orthonormal systems $\{\phi_{j,n}\}_n$ of $V_{j}$ and, as discussed above, is nothing but a discrete convolution with a particular filter;
    \item the nonlinearity $\sigma_l^\mathrm{w}\colon (\mathbb{R}^{\alpha_l})^{c_l}\to (\mathbb{R}^{\alpha_l})^{c_l} $, written between scaling coefficients, is the composition of an upsampling, a pointwise nonlinearity, and a  downsampling (below, we will also consider a simplified version without upsampling and downsampling, see Remark~\ref{simply_cgnn});
    \item and, finally, the synthesis operator $S_L$ outputs a function in $V_{j_L}$ given its scaling coefficients.
\end{itemize}
As such, the implementation of the CGNN closely resembles a discrete GNN ($\S$\ref{sub:1Ddiscrete}). The key differences lie in the presence of the final synthesis operator, in the nonstandard nonlinearities, and in the double convolution in \eqref{conv_proj} with a fixed filter $\eta_\nu$.

\section{Injectivity of CGNNs}
\label{sec:injectivity}

We are interested in studying the injectivity of the continuous generator \eqref{continuous_generator} to guarantee uniqueness in the representation of the signals. The injectivity will also allow us, as a by-product, to obtain stability results for inverse problems using generative models, as in Section~\ref{sec:deep_land}. 

We consider here the 1D case with stride $s = \frac{1}{2}$ ($\nu = 1$); then, applying \eqref{stride_e_j} iteratively, we obtain $j_{l} = j_1 + l -1$ for $l=1,\dots,L$. We also consider non-expansive convolutional layers, i.e.\ $c_{l} = \frac{c_{l-1}}{2} = s c_{l-1} = \frac{c_1}{2^{l-1}}$. We note that the same result holds also with expansive convolutional layers, arbitrary stride (possibly dependent on $l$) and in the 2D case (see Appendix~\ref{sec:extension}).

We make the following assumptions.

\paragraph{Assumptions on the scaling spaces $V_{j_{l}}$}
\begin{hyp}
\label{hyp_scaling_spaces}
The spaces $V_{j_{l}}\subset L^2(\R)$, with $j_{l} \in \N$, belong to an MRA (see Definition~\ref{MRA}), whose scaling function $\phi$ is compactly supported and bounded. Furthermore, there exists $r \in \mathbb{Z}$ such that
    \begin{equation}
    \label{eta_diverso_0}
    \eta_1(r) \neq 0,  
    \end{equation}
  where $\eta_1(r)$ is defined in \eqref{eta}.
\end{hyp}

    We observe that Hypothesis~\ref{hyp_scaling_spaces} implies that the sequence $\eta_1$ defined in \eqref{eta} has a finite number of non-zero elements. As a consequence, when $g$ is compactly supported, the filters $d_\eta$ can be represented by finite-dimensional vectors, as for discrete neural networks. 
    However, as highlighted in Lemma~\ref{formula_compressa_conv_proj}, the continuous strided convolution involves a double convolution \eqref{conv_proj}, which is not present in the discrete strided convolution.

\begin{remark}[Haar and Daubechies scaling functions]
\label{matlab_code_eta}
For positive functions, such as the Haar scaling function, i.e.\ $\phi = \mathbbm{1}_{[0,1]}$, condition \eqref{eta_diverso_0} is easily satisfied. For the Daubechies scaling functions with $N$ vanishing moments for $N = 1,2,...,45$ ($N = 1$ corresponds to the Haar scaling function), we verified  condition~\eqref{eta_diverso_0} numerically. We believe that this condition is satisfied for every scaling function $\phi$, but have not been able to prove this rigorously.

We note, however, that Hypothesis~\ref{hyp_scaling_spaces} is clearly needed for the injectivity of the convolutional layers. Indeed if we had $\eta_1(r)=0$ for every $r\in\Z$, then
\begin{equation*}
    \langle \phi_{j,n} * \phi_{j+1,p}\,,\,\phi_{j+1,m}\rangle_2=0,\qquad j,n,p,m\in\Z.
\end{equation*}
Therefore, Hypothesis~\ref{hyp_scaling_spaces} guarantees that the continuous deconvolution $\cdot *_{j\to j+1} g$ is not identically $0$ (at least for some filter $g\in V_{j+1}$).
\end{remark}

\paragraph{Assumptions on the convolutional filters}
The following hypothesis asks that, at each convolutional layer, the filters are compactly supported, where $\bar{p} + 1$ represents the filters' size. Generally, the convolutional filters act locally, so it is natural to assume that they have compact support. Furthermore, we ask the filters to be linearly independent, in a suitable sense: this is needed for the injectivity of the convolutional layers.

\begin{hyp}
\label{hyp_filtri}
Let $\bar p\in\N$. For every $l=2,...,L$, the convolutional filters $t^{l}_{i,k} \in V_{j_{l}}$ of the $l$-th convolutional layer \eqref{psi_ell} satisfy  
    \begin{equation}
    \label{decomposition_filter}
    t^{l}_{i,k} = \sum_{p=0}^{ \bar{p}} d^{l}_{p,i,k} \phi_{j_{l},p}, \qquad i=1,...,c_{l-1}, \qquad k=1,...,c_{l},
    \end{equation}
    where $d^{l}_{p,i,k} \in \R$, and
 $\det({D}^{l}) \neq 0$, where ${D}^{l}$ is the $\frac{c_1}{2^{l-1}} \times \frac{c_1}{2^{l-1}}$ matrix defined by
\begin{equation}
\label{D_0}
({D}^{l})_{i,k}  :=
    \begin{cases}
      d^{l}_{0,i,k} & k = 1,...,\frac{c_1}{2^{l}},\\
      d^{l}_{1,i,k-\frac{c_1}{2^{l}}} & k = \frac{c_1}{2^{l}}+1,...,\frac{c_1}{2^{l-1}}.
    \end{cases}
\end{equation}
\end{hyp}

\begin{remark}
The condition $\det({D}^{l}) \neq 0$ is sufficient for the injectivity of the convolutional layers, but not necessary. The necessary condition is given in \ref{injectivity_proof_supp_mat}, and consists in requiring the rank of a certain block matrix to be maximum, in which $D^l$ is simply the first block. We note that this condition is  independent of the scaling function $\phi$, but depends only on the filters' coefficients $d^{l}_{p,i,k}$.
\end{remark}

\begin{remark}[Analogy between continuous and discrete case]
\label{analogy_cont_discr}
    The splitting operation of the filters' scaling coefficients in odd and even entries, as in Hypothesis~\ref{hyp_filtri}, reminds the expression of the discrete convolution with stride $s = \frac{1}{2}$. Indeed, a discrete filter $t$ is split into $t_0$, containing the even entries of $t$, and $t_1$, containing the odd ones (see \ref{proof_adjoint})
\end{remark}

\paragraph{Assumptions on the nonlinearity}
For simplicity, we consider the same nonlinearity $\sigma \colon \mathbb{R} \to \mathbb{R}$ in each layer (the generalization to the general case is straightforward). The following conditions guarantee that $\tilde\sigma_l$ is injective.
\begin{hyp} 
\label{hyp_sigma}
We assume that 
\begin{enumerate}
    \item $\sigma \in C^1(\mathbb{R})$ is injective and $M_1\le \sigma'(x) \le M_2$ for every $x \in \R$, for some $M_1,M_2>0$; \label{cond_1}
    \item $\sigma(0)=0$ and $\sigma$ preserves the sign, i.e.\ $x \cdot \sigma(x) \geq 0$ for every $x \in \mathbb{R}$; \label{cond_2}
\end{enumerate}    
\end{hyp}

Note that these conditions  ensure that $|\sigma(x)| \leq L |x|$ for every $x \in \R$, for some $L>0$, and so $\sigma_{l}(f) \in (L^2(\R))^{c_{l}}$ for every $f \in (L^2(\R))^{c_{l}}$. It is also straightforward to check that the injectivity of $\sigma \colon \R \to \R$ ensures the injectivity of 
    \[
    \sigma_{l} \colon (V_{j_{l}})^{c_{l}}  \to (L^2(\mathbb{R}))^{c_{l}},\qquad 
    f \mapsto \sigma_{l}(f).
    \]

\begin{remark}
\label{remark_haar_sigma}
In the Haar case, the projection after the nonlinearity can be removed, as explained at the end of Section~\ref{sec:1D_continuous_generator_architecture}, and we need to  verify only the injectivity of $\sigma_{l}$ instead of that of $\tilde\sigma_l=P_{(V_{j_{l}})^{c_{l}}} \circ \sigma_{l}$. As noted above, a sufficient condition to guarantee the injectivity of $\sigma_{l}$ is the injectivity of $\sigma$. So, in the Haar case, Hypothesis~\ref{hyp_sigma} can be relaxed and replaced by:
\vspace{-2mm}
\begin{enumerate}
    \item $\sigma$ is injective;
    \item There exists $L >0$ such that $|\sigma(x)| \leq L |x|$ for every $x \in \R$.
\end{enumerate}  
\end{remark}

Hypothesis~\ref{hyp_sigma} is satisfied for example by the function $\sigma_{\rm hp}(x) = |x| \arctan(x)$. Its relaxed version, in the Haar case,  is satisfied by some commonly used nonlinearities, such the Sigmoid, the Hyperbolic tangent, the Softplus, the Exponential linear unit (ELU) and the Leaky rectified linear unit (Leaky ReLU). Our approach does not allow us to consider non-injective $\sigma$'s, such as the ReLU \cite{injective_relu}.

For simplicity, in Hypothesis~\ref{hyp_sigma}, we require that $\sigma \in C^1(\R)$ and that $\sigma'$ is strictly positive everywhere. This allows us to use Hadamard's global inverse function theorem \cite{Gordon_diffeomorfismo} to obtain the injectivity of the generator. However, thanks to a generalized version of Hadamard's theorem \cite{Pourciau}, we expect to be able to relax the conditions by requiring only that $\sigma$ is Lipschitz and its generalized derivative is strictly positive everywhere. In this way, the Leaky ReLU would satisfy the assumptions.

\paragraph{Assumptions on the fully connected layer}
We impose the following natural hypothesis on the fully connected layer. 

\begin{hyp}
\label{hyp_f_c}
We assume that 
\begin{enumerate}[leftmargin=0.7cm]
    \item The linear function $F \colon \R^{\Esse} \to (V_{j_1})^{c_1}$ is injective;
    \item There exists $N \in \N$ such that $b \in (\spann\{ \phi_{j_1,n} \}_{n=-N}^{N})^{c_1}$ and $\Im(F) \subset (\spann\{ \phi_{j_1,n} \}_{n=-N}^{N})^{c_1}$. 
\end{enumerate}
\end{hyp}
The inclusion $F(\R^{\Esse})+b \subset (V_{j_1})^{c_1}$ is natural, since
we start with low-resolution signals. The second condition in Hypothesis~\ref{hyp_f_c} means that the image of the first layer, $F(\R^{\Esse})+b$, contains only compactly supported functions. This is natural since we deal with signals of finite size. 

Even the injectivity of $F$ is non-restrictive, since we choose the dimension $\Esse$ of the latent space to be much smaller than the dimension of $\Im(F) = (\spann\{ \phi_{j_1,n} \}_{n=-N}^{N})^{c_1}$, which is $c_1(2N+1)$. So, $F$ maps a low-dimensional space into a higher dimensional one.

\paragraph{The injectivity theorem}
The main result of this section reads as follows.
\begin{thm}
\label{main_thm}
Let $L \in \mathbb{N}^*$ and $j_1 \in \mathbb{Z}$. Let $c_1 = 2^{L-1}$, $c_{l} = \frac{c_1}{2^{l - 1}}$ and $j_{l} = j_1 + l - 1$ for every $l=1,...,L$. Let $V_{j_{l}}$ be the scaling function space arising from an MRA, and $t^{l}_{i,k} \in V_{j_{l}}$ for every $l=2,...,L$, $i=1,...,c_{l-1}$ and $k=1,...,c_{l}$. Let $\tilde{\Psi}_{l}$ and $\tilde{\sigma}_{l}$ be defined as in \eqref{tilde_psi} and \eqref{tilde_sigma}, respectively. Let $\Psi_1$ be defined as in \eqref{eq:Psi1}.
If Hypotheses~\ref{hyp_scaling_spaces}, \ref{hyp_filtri}, \ref{hyp_sigma} and \ref{hyp_f_c} are satisfied, then the generator $G$ defined in \eqref{continuous_generator} is injective.
\end{thm}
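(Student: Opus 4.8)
The plan is to use that $G$ is a composition,
\[
G= \left( \comp_{l=L}^2 \tilde\sigma_{l} \circ \tilde\Psi_l\right)\circ \left(\tilde\sigma_1 \circ \Psi_1\right),
\]
and that a composition of injective maps is injective; hence it suffices to show that each factor $\Psi_1$, $\tilde\sigma_l$ ($l=1,\dots,L$) and $\tilde\Psi_l$ ($l=2,\dots,L$) is injective on the subspace through which the signal passes (which stays finite-dimensional and compactly supported, thanks to Hypothesis~\ref{hyp_f_c} and $\sigma(0)=0$). The first factor is immediate: $\Psi_1=F\cdot+b$ is affine with $F$ injective by Hypothesis~\ref{hyp_f_c}, so $\Psi_1$ is injective. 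The two remaining families reduce to the nonlinearity hypothesis and the filter hypothesis, respectively.

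For $\tilde\sigma_l=P_{(V_{j_l})^{c_l}}\circ\sigma_l$ I would avoid reasoning directly about the (lossy) projection and instead argue channel by channel by monotonicity. If $\tilde\sigma_l(f)=\tilde\sigma_l(g)$ with $f,g\in(V_{j_l})^{c_l}$, then $f_i-g_i\in V_{j_l}$ for each channel $i$, and by self-adjointness of the orthogonal projection and the mean value theorem,
\[
0=\langle P_{V_{j_l}}\sigma(f_i)-P_{V_{j_l}}\sigma(g_i),\,f_i-g_i\rangle_2=\int_\R\bigl(\sigma(f_i)-\sigma(g_i)\bigr)(f_i-g_i)\,dx\ge M_1\norm{f_i-g_i}^2,
\]
using $\sigma'\ge M_1>0$ from Hypothesis~\ref{hyp_sigma}. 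Hence $f_i=g_i$, so every $\tilde\sigma_l$ is injective (this also yields a lower Lipschitz bound useful for Section~\ref{sec:deep_land}). This self-contained estimate replaces the appeal to Hadamard's theorem mentioned after Hypothesis~\ref{hyp_sigma}; the strict positivity of $\sigma'$ is exactly what is needed.

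The core is the injectivity of the convolutional layers $\tilde\Psi_l$. Since $\tilde\Psi_l$ is affine, the bias is irrelevant and it suffices to show its linear part has trivial kernel. I would pass to wavelet coefficients via Remark~\ref{rem:disc_cont}: writing the input channels as $x_i=\sum_n c^i_n\phi_{j_{l-1},n}$ and using \eqref{conv_proj}, the output coefficients are (up to a fixed nonzero scalar) $\sum_i c^i\ast_{1/2}\tau^{i,k}$ with $\tau^{i,k}=d^l_{\cdot,i,k}\ast_1\eta_1$. A convenient observation is that $\cdot\ast_{1/2}$ is upsampling followed by ordinary convolution, so $\eta_1$ factors out as a common scalar convolution:
\[
e^k:=\sum_{i}c^i\ast_{1/2}\tau^{i,k}=\eta_1\ast u^k,\qquad u^k:=\sum_i(\uparrow_2 c^i)\ast d^l_{\cdot,i,k}.
\]
By Hypothesis~\ref{hyp_scaling_spaces}, $\eta_1\not\equiv0$, so its discrete-time Fourier transform is a nonzero trigonometric polynomial, vanishing only at finitely many frequencies; therefore $e^k=0$ forces $u^k=0$, and the whole dependence on the scaling function $\phi$ decouples from what remains.

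It then remains to show that $u^k=0$ for all $k$ implies $c^i=0$ for all $i$. Splitting each $u^k$ into its even and odd polyphase components turns this into $2c_l=c_{l-1}$ ordinary convolution equations in the $c_{l-1}$ unknowns $c^i$ — the square system made possible precisely by the halving of channels being compensated by the doubling of scale. Taking Fourier transforms yields $A(\omega)\,\hat c(\omega)=0$ for a $c_{l-1}\times c_{l-1}$ matrix $A(\omega)$ of trigonometric polynomials in $z:=e^{-i\omega}$ whose entries are the even and odd polyphases of the filters, and one must show $\det A\not\equiv0$. The crux — and the main obstacle — is to identify the relevant term of this determinant: because each filter is supported on $p=0,\dots,\bar p$, the entries of $A$ are genuine polynomials in $z$ with no negative powers, whose constant ($z^0$) coefficients are exactly $d^l_{0,i,k}$ and $d^l_{1,i,k}$, i.e.\ the entries of the block matrix $D^l$ in \eqref{D_0}. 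Since the $z^0$ coefficient of a determinant is the determinant of the $z^0$ coefficients, the constant term of $\det A$ equals $\det D^l$ (up to transposition), which is nonzero by Hypothesis~\ref{hyp_filtri}. Hence $\det A\not\equiv0$, so $\hat c(\omega)=0$ for a.e.\ $\omega$, giving $c^i=0$ and the injectivity of $\tilde\Psi_l$. Carrying out this polyphase bookkeeping cleanly, and in particular verifying the constant-term identification with $D^l$, is where Hypotheses~\ref{hyp_scaling_spaces} and~\ref{hyp_filtri} are used in tandem and is the technical heart of the proof.
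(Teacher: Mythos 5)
Your proof is correct, and its overall skeleton --- reduce to injectivity of each factor on the finite-dimensional, compactly supported image of the preceding layer --- is the same as the paper's. For the convolutional layers your argument is essentially the paper's recast in the polyphase/Fourier domain: the paper likewise passes to scaling coefficients, divides out $\hat{\eta}_1$ (Lemma~\ref{reformulation_eta}), splits into even and odd entries, and then returns to the coefficient domain to show that the block-banded matrix $\tilde{D}$ of \eqref{D_tilde} has full column rank because its leading square submatrix is block-lower-triangular with $\tilde{D}_0=D^l$ on the diagonal; your observation that the $z^0$ coefficient of $\det A$ equals the determinant of the $z^0$ coefficients is the transform-domain version of exactly that block-triangularity, and it is valid because the filter supports start at $p=0$, so the polyphase entries contain no negative powers of $z$. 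Where you genuinely depart from the paper is the nonlinearity: the paper proves injectivity of $\tilde{\sigma}_l|_{W_l}$ by identifying it with a map $\R^{2N+1}\to\R^{2N+1}$ and invoking Hadamard's global inverse function theorem (checking $C^1$ regularity via Lemma~\ref{derivata_di_sigma}, a nonvanishing Jacobian, and properness using the sign condition $x\,\sigma(x)\ge 0$), whereas your strong-monotonicity computation $\langle\sigma(f_i)-\sigma(g_i),f_i-g_i\rangle_2\ge M_1\norm{f_i-g_i}_2^2$, combined with the self-adjointness of $P_{V_{j_l}}$, is shorter, works on all of $(V_{j_l})^{c_l}$ without any finite-dimensional identification, does not use properness or the sign-preservation part of Hypothesis~\ref{hyp_sigma}, and yields the quantitative lower bound $\norm{\tilde{\sigma}_l(f)-\tilde{\sigma}_l(g)}_2\ge M_1\norm{f-g}_2$ as a bonus. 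What the paper's heavier route buys is reused later: the nonvanishing-Jacobian computation is precisely what Proposition~\ref{differentiable_manifold} needs to show that $G'(z)$ is injective for the Lipschitz stability result (Theorem~\ref{lipschitz_stab}), so if one adopts your monotonicity argument, that derivative computation still has to be carried out separately for Section~\ref{sec:deep_land}.
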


\begin{proof}[Sketch of the proof]
Consider \eqref{continuous_generator}, \eqref{tilde_psi} and \eqref{tilde_sigma}. Note that $\Psi_1$ is injective by Hypothesis~\ref{hyp_f_c}. If we also show that $\tilde\Psi_l$ is injective for every $l=2,\dots,L$ and that $\tilde{\sigma}_{l}$ is injective for every $l=1,\dots,L$, then the injectivity of $G$ will immediately follow.

The injectivity of $\tilde{\Psi}_{l}$ is a consequence of Hypothesis~\ref{hyp_filtri} (together with Hypothesis~\ref{hyp_scaling_spaces}). The injectivity of $\tilde{\sigma}_{l}$ follows from Hypothesis~\ref{hyp_sigma} (together with Hypotheses~\ref{hyp_scaling_spaces} and \ref{hyp_f_c}) and from Hadamard's global inverse function theorem applied to $\tilde{\sigma}_{l}$. The full proof is presented in \ref{injectivity_proof_supp_mat}.
\end{proof}

\begin{remark}[Simplified CGNN architecture]
\label{simply_cgnn}
It is possible to consider a simplified CGNN architecture in which the nonlinearities $\sigma$ are applied on the signal scaling coefficients, i.e.\ $\sigma(f) =\sum_{n \in \mathbb{Z}} \sigma(c_n) \phi_{j,n}$ with $f =\sum_{n \in \mathbb{Z}} c_n \phi_{j,n}\in V_j$. In this case, the projections onto the scaling spaces following the nonlinearities are not needed because $\sigma(f) \in V_j$. Therefore, the injectivity of $\tilde{\sigma}_{\ell}$ is guaranteed by assuming only the injectivity of $\sigma$. Then, the injectivity of the simplified CGNN follows from Theorem~\ref{main_thm} by replacing Hypothesis~\ref{hyp_sigma} with the injectivity of $\sigma$.
\end{remark}

\section{Stability of inverse problems with generative models}
\label{sec:deep_land}
We now show how an injective CGNN can be used to solve ill-posed inverse problems. The purpose of a CGNN is to reduce the dimensionality of the unknown to be determined, and the injectivity is the main ingredient to obtain a rigorous stability estimate.

We consider an inverse problem of the form
\begin{equation}
\label{IP}
    y = \mathcal{F}(x),
\end{equation}
where $\mathcal{F}$ is a possibly nonlinear map between Banach spaces, $X$ and $Y$, modeling a measurement (forward) operator, $x \in X$ is a quantity to be recovered and $y \in Y$ is the noisy data. Typical inverse problems are ill-posed (e.g.\ CT, accelerated MRI, or electrical impedance tomography), meaning that the noise in the measurements is amplified in the reconstruction. For instance, in the linear case, this instability corresponds to having an unbounded (namely, not Lipschitz) inverse $\mathcal F^{-1}$. The ill-posedness is classically tackled by using regularization, which often leads to an iterative method, as the gradient-type Landweber algorithm \cite{Engl}. This can be very expensive if $X$ has a large dimension. 

However, in most of the inverse problems of interest, the unknown $x$ can be modeled as an element of a low-dimensional manifold in $X$. We choose to use a generator $G \colon \R^{\Esse} \to X$ to perform this dimensionality reduction and therefore our problem reduces to finding  $z \in \R^{\Esse}$ such that
\begin{equation}
\label{IPG}
    y = \mathcal{F}(G(z)).
\end{equation}
In practice, the map $G$ is found via an unsupervised training procedure, starting from a training dataset.
From the computational point of view, solving \eqref{IPG} with an iterative method is clearly more advantageous than solving \eqref{IP}, because $z$ belongs to a lower dimensional space. We note that the idea of solving inverse problems using deep generative models has been considered in \cite{IPwithDL,Seo, AKR19, Seo1, Seo2, Dimakis, ulyanov2018deep,asim2020blind}.

The dimensionality reduction given by the composition of the forward operator with a generator as in \eqref{IPG}, has a regularizing/stabilizing effect that we aim to quantify. More precisely, we show that an injective CGNN yields a Lipschitz stability result for the inverse problem \eqref{IPG}; in other words, the inverse map is Lipschitz continuous, and noise in the data is not amplified in the reconstruction. For simplicity, we consider the $1$D case with stride $\frac 1 2$ and non-expansive convolutional layers, but the result can be extended to the $2$D case and arbitrary stride as done in \ref{sec:extension} for Theorem~\ref{main_thm}.

\begin{thm} \label{lipschitz_stab}
Let $X = L^2(\R)$ and $G$ be a CGNN satisfying Hypotheses~\ref{hyp_scaling_spaces}, \ref{hyp_filtri}, \ref{hyp_sigma} and \ref{hyp_f_c}. Let $\mathcal{M} := G(\R^{\Esse})$,  $K \subset \mathcal{M}$ be a compact set, $Y$ be a Banach space and $\mathcal F\colon X\to Y$ be a $C^1$ map (possibly nonlinear).
Assume that $\mathcal{F}$ is injective and  $\mathcal F'(x)|_{T_x \mathcal M}$ is injective for every $x \in \mathcal{M}$. Then there exists a constant $C > 0$ such that
\begin{equation*}
    \| x-y \|_{X} \leq C \|\mathcal{F}(x)-\mathcal{F}(y)\|_{Y}, \qquad x, y \in K.
\end{equation*}
\end{thm}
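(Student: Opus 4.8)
The plan is to transport the estimate to the finite‑dimensional latent space $\R^{\Esse}$ and then prove it there by a compactness/contradiction argument. Two structural facts about $G$ are the backbone, and both should be read off from Theorem~\ref{main_thm} and the way it is proved: that $G$ is a bi‑Lipschitz embedding of $\R^{\Esse}$ onto $\mathcal M$, and that $G$ is a $C^1$ immersion with $\Im G'(z)=T_{G(z)}\mathcal M$. The first will make $\tilde K:=G^{-1}(K)$ compact and $G^{-1}|_{\mathcal M}$ Lipschitz; the second will turn the tangential injectivity hypothesis on $\mathcal F'$ into an immersion statement for $\mathcal F\circ G$.

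First I would establish that $G$ is bi‑Lipschitz. The compact‑support conditions in Hypotheses~\ref{hyp_filtri} and \ref{hyp_f_c} make $G$ factor through finite‑dimensional scaling subspaces (applying $\sigma$ with $\sigma(0)=0$, convolving with compactly supported filters, and projecting all preserve compact support), so every layer may be analysed on a finite‑dimensional space. There, the affine layer $\Psi_1=F\cdot+b$ is coercive since $F$ is injective (Hypothesis~\ref{hyp_f_c}); each $\tilde\Psi_l$ from \eqref{tilde_psi} is an injective linear map, hence bounded below; and each $\tilde\sigma_l$ from \eqref{tilde_sigma} is expansive with constant $M_1$. The last point is the one worth spelling out: for $f,g$ in the relevant subspace of $(V_{j_l})^{c_l}$, write $w=f-g$; since $w\in (V_{j_l})^{c_l}$ and the projection is self‑adjoint, $\langle \tilde\sigma_l(f)-\tilde\sigma_l(g),w\rangle=\langle \sigma_l(f)-\sigma_l(g),w\rangle\ge M_1\|w\|^2$ using $\sigma'\ge M_1$ pointwise, and Cauchy--Schwarz then gives $\|\tilde\sigma_l(f)-\tilde\sigma_l(g)\|\ge M_1\|f-g\|$. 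Composing the layer‑wise lower bounds shows $\|G(a)-G(b)\|\ge c_0\|a-b\|$ for some $c_0>0$, while $\sigma'\le M_2$ and boundedness of the linear layers give the matching Lipschitz upper bound. Consequently $G^{-1}\colon\mathcal M\to\R^{\Esse}$ is Lipschitz and $\tilde K=G^{-1}(K)$ is compact.

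Next set $H:=\mathcal F\circ G\colon \R^{\Esse}\to Y$. It is $C^1$ and injective (both $G$ and $\mathcal F$ are injective), and by the immersion property $G'(z)$ is injective with $\Im G'(z)=T_{G(z)}\mathcal M$, so the assumption that $\mathcal F'(G(z))|_{T_{G(z)}\mathcal M}$ is injective and the chain rule $H'(z)=\mathcal F'(G(z))\circ G'(z)$ force $H'(z)$ to be injective for every $z$. I then claim an inverse‑Lipschitz bound $\|a-b\|\le c\,\|H(a)-H(b)\|_Y$ for $a,b\in\tilde K$, proved by contradiction: if it fails there are $a_n\ne b_n$ in the compact set $\tilde K$ with $\|H(a_n)-H(b_n)\|/\|a_n-b_n\|\to0$, and after passing to a subsequence $a_n\to a^\star$, $b_n\to b^\star$. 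If $a^\star\ne b^\star$, injectivity of $H$ makes the ratio tend to a positive limit, a contradiction. If $a^\star=b^\star=z^\star$, I use that $H'$ is continuous and $H'(z^\star)$ is injective on the finite‑dimensional $\R^{\Esse}$, hence bounded below by some $2\delta>0$; a first‑order expansion $H(a_n)-H(b_n)=\int_0^1 H'(b_n+t(a_n-b_n))(a_n-b_n)\,dt$ then yields $\|H(a_n)-H(b_n)\|\ge \delta\|a_n-b_n\|$ for large $n$, again a contradiction. Finally, for $x=G(a),y=G(b)\in K$ with $a,b\in\tilde K$, combining the Lipschitz upper bound on $G$ with this inverse‑Lipschitz bound gives $\|x-y\|_X\le C_0\|a-b\|\le C_0 c\,\|H(a)-H(b)\|_Y=C_0 c\,\|\mathcal F(x)-\mathcal F(y)\|_Y$, which is the claim with $C=C_0 c$.

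The main obstacle is the degenerate case of the compactness argument, where the two sequences collapse to a common limit: there global injectivity of $\mathcal F$ (and of $H$) carries no quantitative content, and one must instead exploit the infinitesimal information, namely the uniform lower bound on $H'$ near the limit coming from the immersion property together with continuity of $H'$. For this step to be legitimate it is essential that the latent preimages themselves converge, which is precisely why I first invest in showing that $G$ is a proper bi‑Lipschitz embedding rather than merely an injective immersion; establishing that $\tilde K$ is compact and that $\Im G'(z)=T_{G(z)}\mathcal M$ is the real technical heart, after which the estimate follows from standard compactness reasoning.
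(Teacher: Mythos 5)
Your proposal is correct, but it takes a genuinely different route from the paper. The paper's proof is essentially a reduction: it first proves (Proposition~\ref{differentiable_manifold}) that $G$ is injective, $C^1$, with $G'(z)$ injective for every $z$, concludes that $\mathcal M$ is a Lipschitz differentiable manifold with the single chart $\{\mathcal M, G^{-1}\}$, and then invokes the external result \cite[Theorem 2.2]{Alberti_Santacesaria_Arroyo} to obtain the Lipschitz estimate. You instead give a self-contained argument: you establish a global two-sided Lipschitz bound for $G$ by composing layerwise lower bounds (coercivity of the injective affine layer, lower-boundedness of the injective linear convolutional layers on the finite-dimensional subspaces $W_{l-1}$ of Lemma~\ref{shared_compact_support_each_layer}, and the monotonicity estimate $\langle\sigma_l(f)-\sigma_l(g),f-g\rangle\ge M_1\|f-g\|^2$ combined with self-adjointness of the projection for $\tilde\sigma_l$), which simultaneously yields compactness of $\tilde K=G^{-1}(K)$ and injectivity of $G'(z)$ (since a bi-Lipschitz lower bound passes to the derivative); you then run the standard two-case compactness/contradiction argument for $H=\mathcal F\circ G$ on $\tilde K$, handling the degenerate case $a^\star=b^\star$ via continuity of $H'$ and the pointwise lower bound on $H'(z^\star)$. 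This is exactly the kind of argument the cited theorem encapsulates, so you are effectively reproving it in the single-chart setting. What your approach buys is transparency and an explicit quantitative intermediate result (the global bi-Lipschitz property of $G$, which is stronger than what the paper states); what the paper's approach buys is brevity and generality, since \cite[Theorem 2.2]{Alberti_Santacesaria_Arroyo} also covers manifolds given by several charts and the finite-measurement variant mentioned after the theorem. The only point worth making explicit in your write-up is that the identification $\Im G'(z)=T_{G(z)}\mathcal M$ is definitional once $\mathcal M$ carries the atlas $\{(\mathcal M,G^{-1})\}$, whose legitimacy rests on the continuity of $G^{-1}$ that your bi-Lipschitz bound provides.
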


The proof of Theorem~\ref{lipschitz_stab} can be found in \ref{app:lip} and is mostly based on Theorem~\ref{main_thm} and \cite[Theorem 2.2]{Alberti_Santacesaria_Arroyo}. This Lipschitz estimate can also be obtained in the case when only finite measurements are available, i.e.\ a suitable finite-dimensional approximation of $\mathcal{F}(x)$, thanks to \cite[Theorem $2.5$]{Alberti_Santacesaria_Arroyo}. A similar estimate can be derived for $\|G^{-1}(x) - G^{-1}(y)\|_{\R^S}$ (see \cite[Proof of Theorem~2.2]{Alberti_Santacesaria_Arroyo}), even though this bound in the latent space is less relevant for inverse problems.

The Lipschitz stability estimate provided in Theorem~\ref{lipschitz_stab} ensures the convergence of the Landweber algorithm (see \cite{de2012local}), which can be used as a reconstruction method.
In our setting, this  algorithm is applied to the functional $\mathcal{F} \circ G$ and, given an initial guess $z_0$, it produces a sequence of iterations $z_k$ 
\begin{equation}
\label{deep_landweber}
    z_k = z_{k-1} - h \nabla(\mathcal{F} \circ G)(z_{k-1}), \qquad k \geq 1, 
\end{equation}
where $h > 0$ is the stepsize.

Although the injectivity of the generator is not a necessary condition for solving inverse problems of the type \eqref{IPG}, in order to prove theoretical results about inverse problems, it is still a mandatory assumption, because of the proof techniques. It is possible nevertheless that global injectivity of the generator is not needed, and a weaker local one might suffice. This could be justified by using a differential geometric approach, as in \cite{Alberti_Santacesaria_Arroyo}. In Remark~\ref{toy_ex_inj} below, we provide a toy example in which a non-injective generator makes the Landweber iteration not convergent. However, it is still not clear, from the theoretical point of view, whether non-injective models can be successfully used to solve inverse problems.

\begin{remark}
\label{toy_ex_inj}
We demonstrate how a non-injective generative model can lead to difficulties in solving inverse problems. For simplicity, we consider a simple one-dimensional toy example, with a ReLU activation function $\sigma$, which is not injective.
Let
\[
G\colon\R\to\R,\qquad G(z) = \sigma(-\sigma(z)+1)=(-z_+ +1)_+,
\]
where $z_+ = \max(z,0)$. This is a simple 2-layer neural network, where the affine map in the first layer is $z\mapsto z$, and the affine map of the second layer is $x\mapsto -x+1$. It is immediate to see that this network generates the set $[0,1]\subset\R$. However, it is not injective:
\begin{equation}
\label{eq:Gnotinj}
G(z)=
\begin{cases}
  1 &\text{if $z\le 0$,}\\
  0 &\text{if $z\ge 1$.}
\end{cases}
\end{equation}

Suppose now we wish to solve the inverse problem $\mathcal F(x)=y$, as in \eqref{IP} with any map $\mathcal F$, for $x\in [0,1]$. Writing $x=G(z)$, we are reduced to solving \eqref{IPG}, namely $\mathcal F(G(z))=y$. If we solve this by using any iterative method, as \eqref{deep_landweber}, with an initial guess $z_0\in\R\setminus [0,1]$, we have $z_k=z_0$ for every $k$, since $G$ is constant in a neighborhood of $z_0$ by \eqref{eq:Gnotinj}. Therefore, in general, it will not be possible to solve the inverse problem with a non-injective generative model.

This issue appears whenever the support of the latent distribution for $z$ is larger than $[0,1]$, where $G$ is injective. It is worth noting that this issue would be solved if we restrict $G$ to  $[0,1]$, the effective support of the ideal latent distribution. This aspect is connected to the behavior of generative models with out-of-distribution data, and a detailed discussion goes beyond the scopes of this work.
\end{remark}

\section{Numerical results}\label{sec:num}
We present here numerical results validating our theoretical findings. In Section~\ref{sec:training} we describe how we train a CGNN, in Section~\ref{sec:deconv} we apply a CGNN-based reconstruction algorithm to signal deblurring, and in Section~\ref{appendix:generation} we show qualitative results on the generation and reconstruction capabilities of CGNNs.

\subsection{Training}
\label{sec:training}
The conditions for injectivity given in Theorem~\ref{main_thm} are not very restrictive and we can use an unsupervised training protocol to choose the parameters of a CGNN. Even though our theoretical results concern only the injectivity of CGNNs, we numerically verified that training a generator to also well-approximate a probability distribution gives better reconstructions for inverse problems. For this reason, we choose to train CGNNs as parts of variational autoencoders (VAEs) \cite{kingma2013auto}, a popular architecture for generative modeling. In particular, our VAEs are designed so that the corresponding decoder has a CGNN architecture. We refer to \cite{VAE} for a thorough review of VAEs. Note that there is growing numerical evidence showing that an untrained convolutional network is competitive with trained ones for solving inverse problems with generative priors \cite{ulyanov2018deep,asim2020blind}. In these cases, the network is initialized at random, and the weights are obtained by minimizing the fidelity term corresponding to a given inverse problem. As a result, the architecture of the network itself acts as a regularizer.

In our experiment, the training is done on smooth signals. We create a dataset of smooth signals constructed by randomly sampling the first $5$ low-frequency Fourier coefficients. Each of these is taken from a Gaussian distribution with zero mean and variance that decreases as the frequency increases. The decay of the Fourier coefficients is used to enforce smoothness. In order to show the validity of our method in a high resolution setting, the support of the signals $[0,1]$ is finely discretized with $4096$ equidistant points. We divide the dataset in $10000$ signals to train the network and $2000$ to test it.

Our VAE consists of a decoder with 3 non-expansive transposed convolutional layers without bias terms and an encoder with a similar, yet mirrored, structure. For the decoder, we consider the simplified CGNN architecture described in Remark~\ref{simply_cgnn}. Differently from the implementation described in Section~\ref{implementation}, we do not pass from the scaling coefficients in $V_j$ to the ones in $V_{j+M}$ and vice-versa at every layer. Here, both the nonlinearity and the convolution are applied to the scaling coefficients in $V_j$. Therefore, our VAE takes as input the scaling coefficients of our signals obtained by doing $6$ downscalings, which already constitute  a significant dimensionality reduction with respect to the original finely discretized signals (the scaling coefficients are approximately $\frac{4096}{2^6} = 64$). We numerically verified that the simplified CGNN architecture provides very similar results to the original architecture. In addition, we also verified that it is less computationally expensive, since the whole network acts only on the scaling coefficients. For our experiments, we choose the stride $s = \frac{1}{2}$, the latent space dimension $\Esse = 15$ and the leaky-ReLU as nonlinear activation function.   

The training is done with the Adam optimizer using a learning rate of $0.01$ and the loss function, commonly used for VAEs, given by the weighted sum of two terms: the Mean Square Error (MSE) between the original and the generated signals and the Kullback-Leibler Divergence (KLD) between the standard Gaussian distribution and the one generated by the encoder in the latent space\footnote{All computations were implemented with Python3, running on a workstation with 256GB of RAM and 2.2 GHz AMD EPYC 7301 CPU and Quadro RTX 6000 GPU with 22GB of memory. All the codes are available at \href{https://github.com/ContGenMod/Continuous-Generative-Neural-Network}
{\texttt{https://github.com/ContGenMod/Continuous-Generative-Neural-Network}}}. 

The injectivity of the decoder, i.e.\ of the generator, is guaranteed if Hypotheses~\ref{hyp_scaling_spaces}, \ref{hyp_filtri} and \ref{hyp_f_c} are satisfied and if $\sigma$ is injective (see Remark~\ref{simply_cgnn}). We test Hypothesis~\ref{hyp_filtri} a posteriori, i.e.\ after the training, and, in our cases, it is always satisfied. The other assumptions are verified using the leaky-ReLU as nonlinearity and the Daubechies scaling functions for the spaces $V_j$. We observed that the leaky-ReLU  empirically outperformed  other
activation functions such as ReLU and ELU.

\subsection{Deblurring with generative models}
\label{sec:deconv}

\subsubsection{The deblurring problem}
We consider the following deblurring problem 
\begin{equation*}
    y = f * x + e,
\end{equation*}
where
\begin{itemize}
    \item $x$ is a smooth signal with support $[0,1]$ discretized with $4096$ equidistant points. More precisely, it is obtained by discretizing a truncated Fourier series:
    \begin{equation*}
        x(t) = \frac{a_0}{2} + \sum_{n=1}^2 a_n \cos{(2 \pi nt)} + b_n \sin({2\pi nt)},
    \end{equation*}
    where $a_n,b_n \sim N\left(0,\frac{1}{(n+1)^6}\right)$. Note that the variance of the Gaussian distribution decreases when the frequency $n$ increases. This ensures smoother signal behavior.
    \item $f$ is the Gaussian blurring filter obtained by evaluating a $N(0,4)$ in $1501$ equidistant point in $[-4,4]$.
    \item $e$ is a weighted $4096$ random Gaussian noise, i.e.\ $e=\tau \epsilon$, where $\tau \geq 0 $ is a weight and $\epsilon \sim \mathcal N(0,I_{4096})$. We consider two different levels of noise corresponding to $\tau = 0$ (no noise, only blurring filter) and $\tau = 0.3$.
    \item The symbol $*$ represents the discrete convolution (see equation \eqref{ast} in Section~\ref{proof_adjoint}).
\end{itemize}

\subsubsection{Reconstruction algorithms}
In order to solve this deblurring problem, we compare the two following approaches.
\paragraph*{Deep Landweber for discrete generators}
We consider the Landweber algorithm applied to $y = f \ast G(z) + e$, where $G$ is a discrete injective generative NN (as described in Section~\ref{sub:1Ddiscrete}) giving as output discretized signals in $\R^{4096}$. In this case, the forward operator is $z\mapsto f \ast G(z)$ and the space in which the iterations are performed is the low-dimensional latent space $\R^{15}$, to which $z$ belongs. In order to choose an appropriate initial value, one option is to generate vectors $z_i \in \R^{15}$ with random Gaussian entries and compute the MSE between the data $y$ and $f \ast G(z_i)$ for every $i$. Then, we  choose the $z_i$ that minimizes the MSE. Otherwise, another option is to choose the initial guess as $E(y)$, where $E$ is the encoder of the VAE. However, we notice that in practice, for our deblurring problem, the algorithm converges to a good solution with almost every initial guess $z \in \R^{15}$ with random Gaussian entries. Therefore, our results are shown in this setting. The iterative step \eqref{deep_landweber} becomes
\begin{equation*}
 z_k = z_{k-1} -h (G'(z_{k-1}))^t f \ast (f \ast G(z_{k-1})-y),\qquad k\ge 1, \qquad h = 0.0005 \text{ fixed}.  
\end{equation*}
We run the algorithm until $\frac{\| z_k-z_{k-1} \|}{\|z_k\|} < \delta$, with $\delta = 10^{-12}$.

\paragraph*{Deep Landweber for continuous generators}
Next, we consider the Landweber algorithm applied to $y = f \ast G(z) + e$, where $G$ is a simplified injective CGNN, as described in Remark~\ref{simply_cgnn}. We consider Daubechies scaling spaces with vanishing moments $N=1,2,6,10$. The generator may be decomposed as $G=\mathcal W\circ \tilde G$, where $\tilde G$ gives as outputs the scaling coefficients of the signals in $V_{j_L}$ and $\mathcal{W}$ is an operator that synthesizes the scaling coefficients at level $j_L$, mapping them to a function in $L^2(\R)$. As for the discrete case above, in practice the algorithm converges to the solution with almost every initial guess $z \in \R^{15}$ with random Gaussian entries and we show the results in this case. The iterative steps are
\begin{equation*}
      z_k = z_{k-1} -h (\tilde G'(z_{k-1}))^t \mathcal{W}^t \circ f \ast (f \ast G(z_{k-1})-y),\qquad k\ge 1, \qquad h = 0.0005 \text{ fixed.}  
\end{equation*}
The stopping criterion is the same as the one of the previous paragraph.

Theoretically, $\mathcal{W}$ maps a sequence of scaling coefficients $(c_n)_{n \in \Z} \in \R^{\Z}$ at level $j=j_L$ into $\sum_{n \in \Z} c_n \phi_{j,n} \in V_j \subset L^2(\R)$ and the corresponding $\mathcal{W}^t$ maps a function $f \in L^2(\R)$ into the sequence of scaling coefficients $(\langle f, \phi_{j,n} \rangle_{L^2(\R)})_{n \in \Z}$.
In practice, as explained in Section \ref{implementation}, we use the scaling coefficients in $V_{j+M}$ for large $M$ to approximate the function $\sum_{n \in \Z} c_n \phi_{j,n}$. Therefore, $\mathcal{W}$ consists of an upsampling transformation repeated $M$ times, assuming all the detail coefficients are equal to zero, while $\mathcal{W}^t$ is an $M$-times downsampling. For our examples, we consider $M=6$.

\subsubsection{Results}

\begin{figure}
\centering
\includegraphics[scale=1]{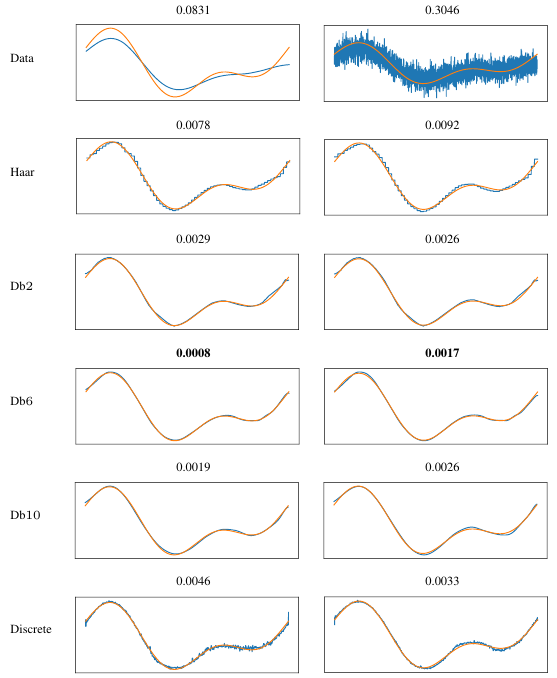}
\caption{Image deblurring. First column: noiseless case. Second column: noisy case. In the first row: original signal $x$ (orange) and blurry one $y$ (blue). In the other rows: original signal $x$ (orange) and reconstructed one (blue) using algorithm \eqref{deep_landweber} where the generators are either CGNNs with different scaling functions or the discrete injective generative NN (last row). At the top of each picture: relative MSE between the two signals.}
\label{deblurring_noise_and_noiseless}
\end{figure}

In Figure~\ref{deblurring_noise_and_noiseless} we provide the results. We compare the original signal (taken from the test set), the corrupted one and the reconstruction, by measuring the relative MSE.

We notice that the discrete VAE yields more irregular reconstructions. This may be due to the significantly higher number of parameters in this network, which is therefore more prone to overfitting. On the contrary, the networks acting on the scaling coefficients at low scales, thanks to the smoothness of Db$6$ and Db$10$, show smoother reconstructions,  coherently with the signals' class. As one would expect, the shape of the wavelet affects the final reconstructions. The choice of the number of vanishing moments $N$ is a tradeoff between the support of the mother wavelet, allowing for good localization properties, and the decay of the wavelet coefficients in the smooth regions. We have not addressed the issue of the optimal choice of $N$ in this work.

In Figure~\ref{deblurring_along_iteration} we show two examples of reconstruction obtained with the iterative algorithm \eqref{deep_landweber} starting from different initial guesses, with the Db$6$ scaling function and blurry data $y$ with no noise. We notice that the algorithm converges to the solution starting from an arbitrary initial guess.
\begin{figure}
\centering
\includegraphics[scale=1]{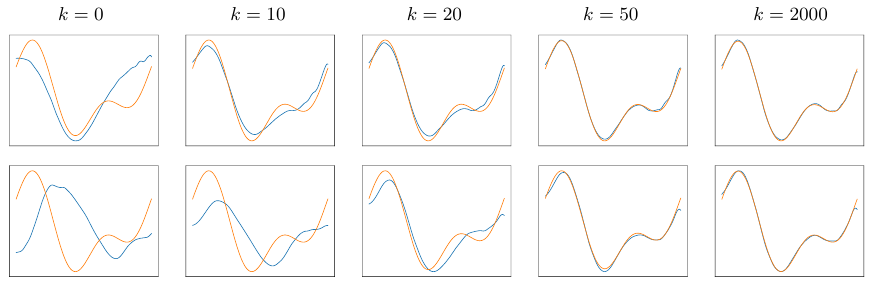}
\caption{In each row: original signal $x$ (orange) and reconstruction (blue) after $k$ iterations of algorithm \eqref{deep_landweber} where the generators are CGNNs with the Db$6$ scaling function. In this case the data is not corrupted by noise; the results with Gaussian noise are almost identical and are omitted.}  
\label{deblurring_along_iteration}
\end{figure}

\subsection{Generation and reconstruction power of CGNNs} 
\label{appendix:generation}

To assess the quality of the generation, in Figure~\ref{generation_comparison} we show random samples of signals from trained injective CGNNs with different Daubechies scaling spaces and from the discrete injective generative NN.

In order to evaluate the reconstruction power of our trained VAEs (Figure~\ref{db1_db2_db6_db10_discrete}), we compute the mean and the variance, over the $200$ signals of the test set, of the MSE between the true signal and the reconstructed one, obtained by applying the full VAE to the true signal. We observe that although the discrete VAE has $50$ times the parameters of the VAEs acting on the scale coefficients, the values of the MSE are comparable (especially in the case of smooth Daubechies wavelets such as db$6$ and db$10$). We also show qualitatively some examples of reconstructed signals in Figure~\ref{ricon_VAE}. Even in these cases, the comparison takes into account the VAEs with different Daubechies scaling spaces and the discrete VAE described in the previous section.

\begin{figure}
\centering
\includegraphics[width=\textwidth]{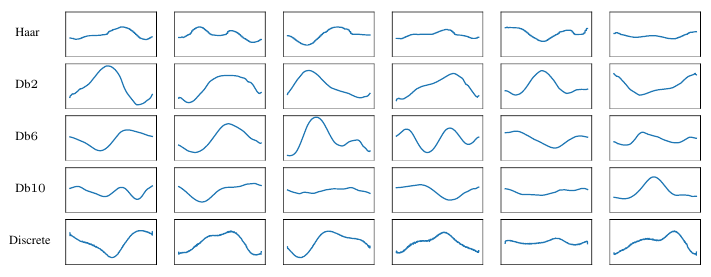}
\caption{Random samples from trained injective CGNNs with different scaling functions and discrete injective generative NN.}
\label{generation_comparison}
\end{figure}

\begin{figure}
\centering  \includegraphics[width=0.5\textwidth]{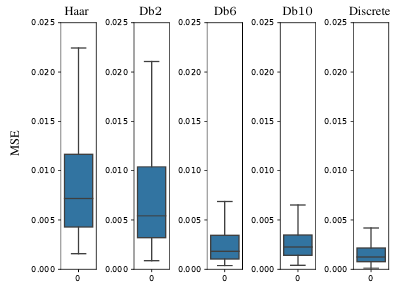}
\caption{Boxplots of the MSE between the original image and the reconstructed one over $200$ images of the test set using VAEs with different scaling functions and the discrete VAE.}
\label{db1_db2_db6_db10_discrete}
\end{figure}

\begin{figure}
\centering    \includegraphics[width=0.9\textwidth]{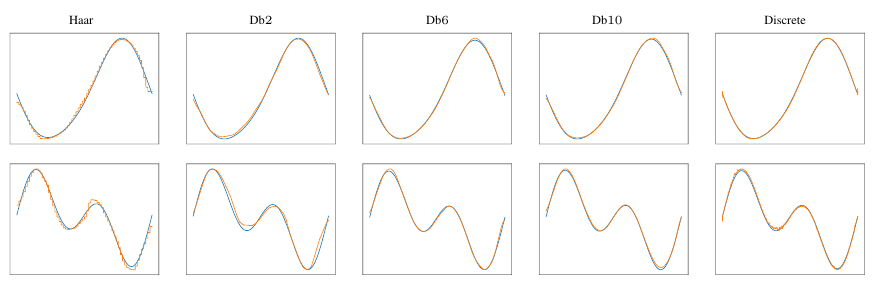}
\caption{Original signal (blue) and reconstructed one (orange) using VAEs with different scaling functions and the discrete VAE.}
\label{ricon_VAE}
\end{figure}

\section{Proofs of the main results}

\subsection{Proof of Theorem~\ref{main_thm}}

\label{injectivity_proof_supp_mat}
We begin with some preliminary technical lemmas.

\begin{lem}
\label{reformulation_eta}
Let Hypothesis~\ref{hyp_scaling_spaces} holds.
Then
\begin{equation*}
\hat{\eta}_1(\xi) \neq 0 \text{ for a.e. } \xi \in [0,1],    
\end{equation*}
where $\hat{\eta}_1(\xi)$ is the Fourier series of $(\eta_1(r))_{r \in \mathbb{Z}}$,  defined as 
\begin{equation}\label{eq:def_fourier}
    \hat{\eta}_1(\xi) := \displaystyle \sum_{r \in \mathbb{Z}} \eta_1(r) e^{-2 \pi i \xi r}, \qquad \xi \in [0,1]. 
\end{equation}
\end{lem}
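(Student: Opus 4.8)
The plan is to exploit the compact support of $\phi$ to show that the sequence $(\eta_1(r))_{r\in\Z}$ has only finitely many nonzero terms, so that $\hat\eta_1$ is in fact a trigonometric polynomial; the desired conclusion then reduces to the elementary fact that a nontrivial trigonometric polynomial can vanish only on a finite set.

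First I would establish the finite support of $\eta_1$. Writing $\supp\phi\subseteq[a,b]$, the integrand in \eqref{eta} with $\nu=1$, namely $\phi(t)\phi(z)\phi(z-2t+r)$, can be nonzero on a set of positive measure only if $t,z\in[a,b]$ and simultaneously $z-2t+r\in[a,b]$. The last condition forces $r\in[a-z+2t,\,b-z+2t]$, and since $t$ and $z$ range over $[a,b]$ this confines $r$ to the bounded interval $[3a-b,\,3b-a]$. Hence $\eta_1(r)=0$ for all but finitely many $r\in\Z$, and \eqref{eq:def_fourier} reduces to a finite sum; that is, $\hat\eta_1$ is a trigonometric polynomial.

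Next, by Hypothesis~\ref{hyp_scaling_spaces}, specifically condition \eqref{eta_diverso_0}, there is at least one index $r$ with $\eta_1(r)\neq 0$, so this trigonometric polynomial is not identically zero. To count its zeros I would substitute $w=e^{-2\pi i\xi}$ and factor out the lowest occurring power, writing $\hat\eta_1(\xi)=w^{r_{\min}}Q(w)$, where $Q$ is an ordinary polynomial whose coefficients are the $\eta_1(r)$ and which is nonzero by the previous step. As $\xi$ runs over $[0,1]$ the point $w$ traces the unit circle, and since $w\neq 0$ there, $\hat\eta_1(\xi)=0$ forces $Q(w)=0$. A nonzero polynomial has only finitely many roots, so $\hat\eta_1$ vanishes at only finitely many $\xi\in[0,1]$; in particular its zero set has Lebesgue measure zero, which is precisely the assertion $\hat\eta_1(\xi)\neq 0$ for a.e.\ $\xi\in[0,1]$.

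The argument is essentially elementary, so there is no genuine analytic obstacle: the whole content is that compact support turns the Fourier series into a polynomial in $w$, and a nonzero polynomial cannot vanish on a set of positive measure. The only points requiring a little care are the support bookkeeping in the first step and ensuring that the reduction to a polynomial in $w$ correctly handles negative indices $r$ (which is harmless, as $w^{r_{\min}}$ never vanishes on the unit circle).
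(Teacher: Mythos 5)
Your proof is correct and follows essentially the same route as the paper: compact support of $\phi$ forces $(\eta_1(r))_r$ to have finitely many nonzero entries, condition~\eqref{eta_diverso_0} makes the resulting finite sum nontrivial, and hence its zero set is negligible. The paper concludes by noting that $\hat\eta_1$ is a nonzero analytic function, whereas you reduce to a nonzero polynomial in $w=e^{-2\pi i\xi}$; this is a cosmetic difference, and your explicit support bookkeeping is a welcome addition the paper leaves implicit.
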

\begin{proof}
By Hypothesis~\ref{hyp_scaling_spaces}, $\phi$ is compactly supported, thus the series $(\eta_1(r))_{r \in \mathbb{Z}}$ has a finite number of non-zero entries. Then, $\hat{\eta}_1$ is an analytic function, with $\hat{\eta}_1 \not \equiv 0$, since by Hypothesis~\ref{hyp_scaling_spaces}, there exists $r \in \Z$ such that $\eta_1(r) \neq 0$. Therefore $\hat{\eta}_1(\xi) \neq 0 \text{ for a.e. } \xi \in [0,1]$.
\end{proof}

\begin{lem}
\label{shared_compact_support_each_layer}
If Hypotheses~\ref{hyp_scaling_spaces}, \ref{hyp_filtri} and \ref{hyp_f_c} are satisfied, then the image of each layer of the generator $G$ defined in \eqref{continuous_generator} contains only compactly supported functions. More precisely: 
\begin{equation*}
    \left( \comp_{\tilde{l}=l}^2 \tilde{\sigma}_{\tilde{l}} \circ \tilde\Psi_{\tilde{l}}\right)
    \circ \left(\tilde{\sigma}_1 \circ \Psi_1\right)(\R^{\Esse}) \subset W_{l}, \quad l=2,...,L, \qquad \left(\tilde{\sigma}_1 \circ \Psi_1\right)(\R^{\Esse}) \subset W_{1},
\end{equation*}  
\begin{equation*}
    \tilde\Psi_{l+1} \circ \left( \comp_{\tilde{l}=l}^2 \tilde{\sigma}_{\tilde{l}} \circ \tilde\Psi_{\tilde{l}}\right)
    \circ \left(\tilde{\sigma}_1 \circ \Psi_1\right)(\R^{\Esse}) \subset {W}_{l+1}, \quad l=2,...,L-1, \qquad \tilde\Psi_{2} \circ \left(\tilde{\sigma}_1 \circ \Psi_1\right)(\R^{\Esse}) \subset {W}_{2},
\end{equation*}
where 
\[
W_{l} := (\spann \{ \phi_{j_{l},n} \}_{n = -N^{l}}^{N^{l}})^{c_{l}},\qquad l=1,\dots,L,
\]
for some $N^{l} \in \N$.
\end{lem}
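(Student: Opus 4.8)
The plan is to argue by induction on the layer index $l$, tracking how compact support propagates through the network. Before starting the induction I would isolate two elementary structural facts. (A) If $g_1,g_2\in L^2(\R)$ are compactly supported, then so is their convolution, with $\supp(g_1\ast g_2)\subset\supp(g_1)+\supp(g_2)$. (B) If $g\in L^2(\R)$ is supported in a bounded interval $[a,b]$ and $\phi$ is compactly supported (Hypothesis~\ref{hyp_scaling_spaces}), say $\supp(\phi)\subset[0,A]$, then the orthogonal projection $P_{V_j}g=\sum_{n\in\Z}\langle g,\phi_{j,n}\rangle_2\,\phi_{j,n}$ has only finitely many nonzero coefficients: a coefficient $\langle g,\phi_{j,n}\rangle_2$ can be nonzero only when $\supp(\phi_{j,n})=2^{-j}[n,n+A]$ meets $[a,b]$, i.e.\ when $2^j a-A\le n\le 2^j b$, which is a finite range of $n$. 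Hence $P_{V_j}g$ lies in a finite span $\spann\{\phi_{j,n}\}_{n=-N}^{N}$ and is again compactly supported.

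For the base case I would invoke Hypothesis~\ref{hyp_f_c}: the image $\Psi_1(\R^\Esse)=F(\R^\Esse)+b$ is contained in $(\spann\{\phi_{j_1,n}\}_{n=-N}^{N})^{c_1}$, so its channels are supported in a common bounded interval. Since the well-definedness condition $|\sigma(x)|\le L|x|$ forces $\sigma(0)=0$, the pointwise nonlinearity vanishes wherever its argument does, so $\sigma_1$ maps compactly supported functions to compactly supported functions with the same support. Applying fact (B) channelwise to the subsequent projection $P_{(V_{j_1})^{c_1}}$ in $\tilde\sigma_1$ then gives $\tilde\sigma_1\circ\Psi_1(\R^\Esse)\subset W_1$ for a suitable radius $N^1$.

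For the inductive step, assume the output after the full layer $l$ lies in $W_l=(\spann\{\phi_{j_l,n}\}_{n=-N^l}^{N^l})^{c_l}$, so its channels share a bounded support. By Hypothesis~\ref{hyp_filtri} each filter $t^{l+1}_{i,k}=\sum_{p=0}^{\bar p}d^{l+1}_{p,i,k}\phi_{j_{l+1},p}$ is compactly supported; combined with fact (A) (and with the convolutional biases, taken compactly supported, or zero as in the implementation), this shows $\Psi_{l+1}$ sends $W_l$ into functions with common bounded support. Fact (B), applied channelwise to the projection in $\tilde\Psi_{l+1}=P_{(V_{j_{l+1}})^{c_{l+1}}}\circ\Psi_{l+1}$, places the image in $(\spann\{\phi_{j_{l+1},n}\}_{n=-N'}^{N'})^{c_{l+1}}$, which is the second inclusion. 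Applying the nonlinearity $\sigma_{l+1}$ (again using $\sigma(0)=0$) and projecting once more keeps us in a finite span $(\spann\{\phi_{j_{l+1},n}\}_{n=-N''}^{N''})^{c_{l+1}}$; setting $N^{l+1}:=\max(N',N'')$ makes a single $W_{l+1}$ absorb both images and closes the induction, yielding the first inclusion at level $l+1$.

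The only genuine subtlety, and the step I expect to be the crux, is fact (B): a priori, projecting a compactly supported function onto the infinite-dimensional space $V_j$ could destroy compact support and produce infinitely many nonzero scaling coefficients. What rescues the argument is precisely the compact support of the scaling function $\phi$ from Hypothesis~\ref{hyp_scaling_spaces}, which forces all but finitely many inner products $\langle g,\phi_{j,n}\rangle_2$ to vanish. Everything else is bookkeeping of supports, with the radii $N^l$ growing in a controlled way from layer to layer.
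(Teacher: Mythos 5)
Your proof is correct and follows essentially the same route as the paper's, which is a three-sentence sketch of exactly this argument: compact support of the first-layer image (Hypothesis~\ref{hyp_f_c}), of the filters (Hypothesis~\ref{hyp_filtri}), and of the scaling function, plus the fact that the pointwise nonlinearity preserves supports. Your write-up is more careful on two points the paper leaves implicit --- that the projection onto $V_j$ of a compactly supported function has only finitely many nonzero scaling coefficients (your fact (B)), and that the convolutional biases must themselves be compactly supported for the stated inclusion to hold --- but the underlying argument is the same.
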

\begin{proof}
By Hypothesis~\ref{hyp_filtri}, the filters of each layer are compactly supported  and, by Hypothesis~\ref{hyp_f_c}, the same happens to the functions in the image of the first (fully connected) layer, $F(\mathbb{R}^{\Esse}) + b$. Moreover, the nonlinearities do not change the support of the functions since they act pointwisely, and each $V_j$ is spanned by the translates of a fixed compactly supported function. Therefore, the image of each layer contains only compactly supported functions.
\end{proof}

In the rest of this section, with an abuse of notation, we indicate with $\sigma$ both the scalar function $\sigma \colon \R \to \R$ and the operator $\sigma \colon V_j^c \to (L^2(\R))^c$. 

\begin{lem}
\label{derivata_di_sigma}
Let $\sigma \colon \R \to \R$ satisfy  Hypothesis~\ref{hyp_sigma}. Let $W$ be a finite-dimensional subspace of $(L^2(\R))^c$ of the form $W_1 \times ... \times W_c$, where $W_i$ are finite-dimensional subspaces of $L^2(\R) \cap L^{\infty}(\R)$ for every $i=1,...,c$.
Then $\sigma \colon W \subset L^2(\R)^c\to L^2(\R)^c$ is Fr\'echet differentiable and its  Fr\'echet derivative  in $g \in W$ is
\begin{equation*}
\begin{aligned}
\sigma'(g) \colon W &\to (L^2(\R))^c\\
f &\mapsto \sigma'(g)[f]
\end{aligned}
\end{equation*}
where
\begin{equation*}
\begin{aligned}
(\sigma'(g)[f])_i \colon \R &\to \mathbb{R}\\
x &\mapsto \sigma ' (g_i(x)) f_i(x),
\end{aligned}
\end{equation*}
for every $i=1,...,c$. Moreover, $\sigma \in C^1(W)$.
\end{lem}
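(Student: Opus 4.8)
The plan is to verify directly that the multiplication operator appearing in the statement is the Fréchet derivative, and that it varies continuously with the base point. I would write $T_g f := (\sigma'(g_i)\,f_i)_{i=1}^c$ for $g,f\in W$ and first check that $T_g$ is a well-defined bounded linear operator $W\to(L^2(\R))^c$. Since Hypothesis~\ref{hyp_sigma} provides $0<M_1\le\sigma'\le M_2$, we get $|\sigma'(g_i(x))f_i(x)|\le M_2|f_i(x)|$, so $T_g$ is linear and bounded with $\|T_g\|\le M_2$. Moreover, from $\sigma(0)=0$ and $\sigma'\le M_2$ one has $|\sigma(t)|\le M_2|t|$, which guarantees $\sigma(g)\in(L^2(\R))^c$ whenever $g\in(L^2(\R))^c$, so $\sigma$ is indeed $(L^2)^c$-valued on $W$.

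The crucial structural fact I would exploit is the finite-dimensionality of $W=W_1\times\cdots\times W_c$ together with $W_i\subset L^2(\R)\cap L^\infty(\R)$: on a finite-dimensional space all norms are equivalent, so there is a constant $C>0$ with $\|h_i\|_{L^\infty(\R)}\le C\|h\|_{(L^2(\R))^c}$ for every $h\in W$ and every $i$. Consequently $L^2$-smallness of an increment forces $L^\infty$-smallness of each of its components. In particular, fixing $g\in W$ and restricting to increments $f$ of bounded norm, all the values $g_i(x)+tf_i(x)$ (for $t\in[0,1]$ and a.e.\ $x$) lie in a fixed compact interval $I\subset\R$, on which the continuous function $\sigma'$ is uniformly continuous; I would denote by $\omega$ its modulus of continuity there.

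For Fréchet differentiability at $g$ I would use the integral form of Taylor's theorem, $\sigma(a+h)-\sigma(a)-\sigma'(a)h=\int_0^1\bigl(\sigma'(a+th)-\sigma'(a)\bigr)h\,dt$, applied with $a=g_i(x)$ and $h=f_i(x)$. Estimating pointwise by uniform continuity yields $\bigl|(\sigma(g+f)-\sigma(g)-T_gf)_i(x)\bigr|\le\omega(\|f_i\|_{L^\infty})\,|f_i(x)|$; squaring, summing over $i$ and integrating then gives $\|\sigma(g+f)-\sigma(g)-T_gf\|_{(L^2)^c}\le\bigl(\max_i\omega(\|f_i\|_{L^\infty})\bigr)\|f\|$. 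Since $\|f_i\|_{L^\infty}\le C\|f\|\to0$ and $\omega(0^+)=0$, the right-hand side is $o(\|f\|)$, establishing that $\sigma$ is Fréchet differentiable at $g$ with derivative $T_g=\sigma'(g)$.

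Finally, for $\sigma\in C^1(W)$ I would show that $g\mapsto T_g$ is continuous into the space of bounded operators. For $g,\tilde g$ in a bounded neighbourhood (so that their values lie in one common compact interval with modulus $\omega$) and for $f\in W$, the same pointwise estimate gives $\|(T_g-T_{\tilde g})f\|_{(L^2)^c}\le\bigl(\max_i\omega(\|g_i-\tilde g_i\|_{L^\infty})\bigr)\|f\|$, whence $\|T_g-T_{\tilde g}\|\le\max_i\omega(\|g_i-\tilde g_i\|_{L^\infty})\to0$ as $\tilde g\to g$ in $W$, again by norm equivalence. I expect the only genuine subtlety to be the passage from the scalar, pointwise Taylor expansion to an $L^2$ estimate uniform in $f$: this is precisely where the $L^\infty$ control coming from finite-dimensionality, and the resulting uniform continuity of $\sigma'$ on a compact interval, are indispensable, since $\sigma'$ is merely bounded, and not uniformly continuous, on all of $\R$.
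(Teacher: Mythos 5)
Your proposal is correct and follows essentially the same route as the paper's proof: finite-dimensionality gives the $L^2$-to-$L^\infty$ norm equivalence, which confines the arguments of $\sigma'$ to a compact interval where uniform continuity supplies a modulus $\omega$, and this controls both the Taylor remainder and the variation of the derivative operator. The only cosmetic difference is that you use the integral form of Taylor's theorem where the paper invokes the mean value theorem; both rest on $\sigma\in C^1(\R)$ and yield the same pointwise estimate.
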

\begin{proof}
Without loss of generality, set $c=1$. Let $g \in W$.
First, we observe that $\sigma'(g)$, as defined above, is linear. It is also bounded i.e.\ there exists $C \in \mathbb{R}^+$ such that $\| \sigma'(g)[f]\|^2_2 \leq C \| f\|^2_2$. Indeed, by  Hypothesis~\ref{hyp_sigma} we have
\begin{equation*}
\| \sigma'(g)[f]\|^2_2  := \int_{\R} ({\sigma}'(g(x))f(x))^2 dx \\  \leq M_2^2 \int_{\R} f(x)^2 dx =M_2^2\| f\|^2_2. 
\end{equation*}

Next, we show that $\sigma'(g)$ is the Fr\'echet derivative of $\sigma$, namely
\begin{equation*}
\lim_{t \to 0} \sup_{\|f\|_2 = 1, f \in W} \frac{\|\sigma(g+tf) - \sigma(g) - t \hspace{0.1cm} \sigma'(g)[f] \|_{2}}{t} = 0,
\end{equation*}
which is equivalent to
\begin{equation}
\label{f_d_quadro}
	    \lim_{t \to 0} \sup_{\|f\|_2 = 1, f \in W} \frac{\|\sigma(g+tf) - \sigma(g) - t \hspace{0.1cm} \sigma'(g)[f] \|^2_{2}}{t^2} = 0.
\end{equation}
Indeed, fixing $x \in \R$, since $\sigma \in C^1(\mathbb{R})$, the mean value theorem yields
\begin{equation*}
\sigma(g(x)+t f(x)) - \sigma(g(x)) = t f(x) \sigma'(g(x) + \tau f(x)),
\end{equation*}
where $0 \leq \tau \leq t \leq 1$.
Then,  \eqref{f_d_quadro} becomes
\begin{equation*}
	\lim_{t \to 0} \sup_{\|f\|_2 = 1, f \in W} \int_{\R}   f(x)^2 (\sigma'(g(x)) - \sigma'(g(x) + \tau f(x)))^2 dx. 
\end{equation*}
We observe that the space where $\sigma'$ is evaluated is
\begin{equation*}
    \{ g(x)+\tau f(x) : x \in \R, 0 \leq \tau\leq 1, f \in W \text{ s.t. } \| f \|_2 = 1 \},
\end{equation*} 
which is contained in $K=[ -(\| g \|_{\infty} +  \tilde{C}), \| g \|_{\infty} +  \tilde{C}]$ where $\| g \|_{\infty}$ is finite since $g \in W\subset L^\infty(\R)$, and $\tilde{C}=\sup_{f \in W, \|f\|_2 =1} \|f\|_\infty$, which is finite since $ W$ is a  finite-dimensional subspace of $L^2(\R) \cap L^{\infty}(\R)$, therefore $\| \cdot \|_2$ is equivalent to $\| \cdot\|_{\infty}$ in $W$.

Moreover, $\sigma'$ is uniformly continuous in $K$, because $\sigma \in C^1(\R)$ by Hypothesis~\ref{hyp_sigma}. Therefore, there exists a modulus of continuity $\omega \colon \R^+ \to \R$ such that 
\begin{equation*}
    | \sigma'(x) - \sigma'(y) | \leq \omega(| x-y|), \qquad x,y \in K,
\end{equation*}
and
\begin{equation}\label{eq:omega0}
    \lim_{t \to 0^+} \omega(t) = \omega(0) = 0.    
\end{equation}
We notice also that we can choose $\omega$ as an increasing function.
Therefore, we obtain 
\begin{equation*}
\begin{split}
    & \lim_{t \to 0} \sup_{\|f\|_2 = 1, f \in W} \int_{\R}   f(x)^2 (\sigma'(g(x)) - \sigma'(g(x) + \tau f(x)))^2 dx \\ 
    & \leq \lim_{t \to 0} \sup_{\|f\|_2 = 1, f \in W}  \int_{\R} f(x)^2 \omega(|\tau f(x)|)^2 dx \\ 
    & \leq \lim_{t \to 0} \sup_{\|f\|_2 = 1, f \in W}  \int_{\R} f(x)^2\omega( t \|  f \|_{\infty})^2 dx\\
    & \leq \lim_{t \to 0}   \omega( t \tilde C)^2 \\
    & = 0.
\end{split}
\end{equation*}

Finally, we show that $\sigma \in C^1(W)$. The continuity of $\sigma'$ in $g$ is verified if
    \begin{equation*}
    \lim_{\| f \|_2 \to 0, f \in W} \| \sigma'(g+f) - \sigma'(g) \|^2_{\mathcal{L}(W,L^2(\R))} = 0.    
    \end{equation*}
    We have
    \begin{equation*}
    \begin{split}
       &  \| \sigma'(g+f) - \sigma'(g) \|^2_{\mathcal{L}(W,L^2(\R))} \\ &  := \sup_{\| h \|_2 = 1, h \in W} \| \sigma'(g+f)[h] - \sigma'(g)[h] \|^2_2 \\ & =
        \sup_{\| h \|_2 = 1, h \in W}  \int_{\R} h(x)^2 (\sigma'(g(x)+f(x))- \sigma'(f(x)))^2 dx.
    \end{split}
    \end{equation*}
    We consider $\| f \|_2 \leq 1$, since $\| f \|_2 \to 0$. Using an argument similar to the one above, we observe that the space where $\sigma'$ is evaluated is compact in $\R$. Thus, there exists an increasing modulus of continuity, as explained before.  
    Therefore, we obtain
    \begin{equation*}
    \begin{split}    
        & \sup_{\| h \|_2 = 1, h \in W}  \int_{\R} h(x)^2 (\sigma'(g(x)+f(x))- \sigma'(f(x)))^2 dx \\ & 
        \leq \sup_{\| h \|_2 = 1, h \in W}  \int_{\R} h(x)^2 \omega(|f(x)|)^2 dx \\ &
         \leq  \omega(\|f\|_{\infty})^2,
    \end{split}
    \end{equation*}
    and $  \omega(\|f\|_{\infty})^2 \to 0$ as $\| f \|_2 \to 0$ by \eqref{eq:omega0}, since $\| \cdot \|_2$ and $\| \cdot \|_{\infty}$ are equivalent norms in $W$.
\end{proof}

We recall a classical result that will be used in the proof: Hadamard's global inverse function theorem.
\begin{thm}[\cite{Gordon_diffeomorfismo}]
\label{Hadamard_thm}
A $C^1$ map $f\colon \R^N\to\R^N$ is a diffeomorphism if and only if the Jacobian never vanishes, i.e.\ $\det(J_f(x)) \neq 0$ for every $x \in \R^N$, and $| f(x) | \to +\infty$ as $| x | \to +\infty$.
\end{thm}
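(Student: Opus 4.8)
The statement is a biconditional, so I would treat the two implications separately. The \emph{necessity} is routine: if $f$ is a diffeomorphism, differentiating the identity $f^{-1}\circ f=\mathrm{id}$ and applying the chain rule gives $J_{f^{-1}}(f(x))\,J_f(x)=I$, whence $J_f(x)$ is invertible and $\det(J_f(x))\neq 0$ for every $x$. For the growth condition, if it failed there would be a sequence with $|x_n|\to\infty$ while $f(x_n)$ remains bounded; extracting a convergent subsequence $f(x_n)\to y$ and using the continuity of $f^{-1}$ would give $x_n=f^{-1}(f(x_n))\to f^{-1}(y)$, contradicting $|x_n|\to\infty$.

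The substance lies in the \emph{sufficiency} direction, and the plan rests on the theory of covering spaces. Since $\det(J_f(x))\neq 0$ everywhere, the inverse function theorem makes $f$ a local $C^1$ diffeomorphism; in particular $f$ is an open map and is a homeomorphism onto an open set near each point. The growth hypothesis I would reformulate as \emph{properness}: for every compact $K\subset\R^N$ the set $f^{-1}(K)$ is closed by continuity and bounded because $|f(x)|\to\infty$, hence compact. The strategy is then to combine these two facts — local homeomorphism plus properness — to prove that $f$ is a covering map, and to conclude using the simple connectedness of the target $\R^N$.

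The covering property is the crux. Fixing $y_0\in\R^N$, the fiber $f^{-1}(y_0)$ is compact (properness) and discrete (local homeomorphism), hence a finite set $\{x_1,\dots,x_k\}$; I would pick pairwise disjoint open sets $U_i\ni x_i$ on each of which $f$ restricts to a homeomorphism onto an open neighborhood of $y_0$. The delicate step is to produce a single connected neighborhood $V$ of $y_0$ with $f^{-1}(V)\subset\bigcup_i U_i$, so that $V$ becomes evenly covered, with $f^{-1}(V)=\bigsqcup_{i=1}^k W_i$ and each $W_i\subset U_i$ mapped homeomorphically onto $V$. I would argue by contradiction: were no such $V$ available, there would be $y_n\to y_0$ and $z_n\in f^{-1}(y_n)\setminus\bigcup_i U_i$; since $\{y_n\}\cup\{y_0\}$ is compact, properness confines $\{z_n\}$ to a compact set, so a subsequence converges to some $z$ with $f(z)=y_0$, forcing $z=x_i$ for some $i$ and thus $z_n\in U_i$ eventually — a contradiction.

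Once $f$ is a covering map from the connected space $\R^N$ onto the simply connected space $\R^N$, the covering must be trivial (a single sheet): the number of sheets equals the index of $f_*\pi_1(\R^N)$ in $\pi_1(\R^N)$, which is $1$ because $\R^N$ is simply connected, so $k=1$ and $f$ is a bijection. A continuous open bijection is a homeomorphism, and a bijective local $C^1$ diffeomorphism has a $C^1$ inverse; hence $f$ is a diffeomorphism. I expect the even-covering argument to be the main obstacle, as it is the sole place where properness and the local structure must be reconciled. An alternative, more analytic route to bijectivity lifts the segment $t\mapsto(1-t)f(x_0)+t\,y_1$ by solving the ODE $\dot x(t)=[J_f(x(t))]^{-1}(y_1-f(x_0))$ with $x(0)=x_0$, where properness prevents the lift from escaping to infinity and guarantees it exists on all of $[0,1]$, thereby reaching every target $y_1$.
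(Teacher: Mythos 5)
You are proving a statement that the paper itself does not prove: Theorem~\ref{Hadamard_thm} is Hadamard's global inverse function theorem, quoted with a citation to the literature and used as a black box in Step~2 of the proof of Theorem~\ref{main_thm}. So there is no in-paper argument to compare against; judged on its own, your proof is the classical covering-space proof of Hadamard's theorem, and it is essentially correct. The necessity direction is fine, the reformulation of the growth condition as properness is fine (closed by continuity, bounded since $|f(x)|\to+\infty$), and the key even-covering argument by contradiction --- $y_n\to y_0$, $z_n\in f^{-1}(y_n)\setminus\bigcup_i U_i$, properness forcing a convergent subsequence $z_n\to z$ with $f(z)=y_0$, whence $z=x_i$ and $z_n\in U_i$ eventually --- is exactly the standard and correct way to reconcile properness with the local structure. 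The conclusion (a covering of the simply connected, locally path-connected space $\R^N$ by a connected space has one sheet; a continuous open bijection is a homeomorphism; a bijective local $C^1$ diffeomorphism has a $C^1$ inverse) is also correct.

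Two small points you should make explicit. First, \emph{surjectivity}: your sheet-counting step tacitly assumes the covering is onto, but a fiber $f^{-1}(y_0)$ could a priori be empty ($k=0$). This is easily repaired: the image of $f$ is open (local homeomorphism) and closed (properness makes $f$ a closed map), hence equals the connected space $\R^N$; equivalently, your own contradiction argument shows the sheet number is locally constant, and it is nonzero somewhere since the domain is nonempty. Second, in the even-covering step you must shrink $V$ not only so that $f^{-1}(V)\subset\bigcup_i U_i$ but also so that $V\subset\bigcap_{i=1}^k f(U_i)$; otherwise $f|_{W_i}$ with $W_i=U_i\cap f^{-1}(V)$ need not map \emph{onto} $V$. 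This is immediate, as $\bigcap_i f(U_i)$ is a finite intersection of open neighborhoods of $y_0$. Finally, note that the ODE continuation you sketch at the end yields surjectivity (since $\tfrac{d}{dt}f(x(t))=y_1-f(x_0)$ and properness keeps the lift in a compact set), but by itself it does not give injectivity, so it complements rather than replaces the monodromy argument.
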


We are now able to prove the main theorem of the paper.

\begin{proof}[Proof of Theorem~\ref{main_thm}]
Let us recall the network $G$ in \eqref{continuous_generator}:
\begin{equation*}
    G= \left( \comp_{l=L}^2 \tilde{\sigma}_{l} \circ \tilde\Psi_l\right)
    \circ \left(\tilde{\sigma}_1 \circ \Psi_1 \right),
\end{equation*}
where
\begin{align*}
 &\tilde{\Psi}_{l} := P_{(V_{j_{l}})^{c_{l}}} \circ \Psi_{l}\colon (V_{j_{l-1}})^{c_{l-1}}\to (V_{j_{l}})^{c_{l}}, \qquad l=2,...,L,\\
&\tilde{\sigma}_{l} := P_{(V_{j_{l}})^{c_{l}}} \circ \sigma_l \colon (V_{j_{l}})^{c_{l}} \to (V_{j_{l}})^{c_{l}}, \qquad l=1,...,L.  
\end{align*}

Note that $\Psi_1=F\cdot +b$ is injective by Hypothesis~\ref{hyp_f_c}. We will show that the restriction of $\tilde\Psi_l$ to the image of the previous layer is injective for every $l=2,\dots,L$ and that the restriction of $\tilde{\sigma}_l$ to the image of the previous layer is injective for every $l=1,\dots,L$. The injectivity of $G$ will immediately follow.

Let us fix a layer $l$. The proof holds for every $l=2,...,L$ for $\tilde\Psi_l$ and for every $l=1,...,L$ for $\tilde{\sigma}_l$.
To simplify the notation, we omit the dependence of the quantities on $l$ and we denote  the number of input channels by  $c$ and  the input scale by $j$. 

\paragraph*{Step 1: \textit{Injectivity of $\tilde\Psi \big|_{W}$}} Let $W = W_{l-1}$, where $W_{l-1}$ is defined in Lemma~\ref{shared_compact_support_each_layer}.
Therefore, there exists $N \in \N$ such that $W = (\spann \{ \phi_{j,n} \}_{n=-N}^{N})^c$. There are $\frac{c}{2}$ output channels and the output scale is $j+1$.
Take $h, g \in W$ such that $\tilde{\Psi}(h) = \tilde{\Psi}(g)$, i.e.\
\begin{equation*}
P_{V_{j+1}} \Big( \sum_{i=1}^{c} t_{i,k} \ast h_i + b_k \Big) = P_{V_{j+1}} \Big( \sum_{i=1}^{c} t_{i,k} \ast g_i + b_k \Big),\quad k=1,...,\frac{c}{2}.
\end{equation*}
We need to show that $h=g$. By the linearity of the projections and of the convolutions, we obtain
\begin{equation}
\label{kernel_convoluzione_nullo}
P_{V_{j+1}} \Big( \sum_{i=1}^{c} t_{i,k} \ast f_i \Big) = 0,\quad k=1,...,\frac{c}{2},
\end{equation}
where $f=h-g$. We need to show that $f_i=0$ for every $i = 1,...,c$. 

Recall that \{$t_{i,k}\}_{i,k}$ satisfy Hypothesis~\ref{hyp_filtri}, and $f \in W$, i.e.\
\begin{equation*}
t_{i,k} = \sum_{p= 0}^{\bar{p}} d_{p,i,k} \phi_{j+1,p}, \qquad d_{p,i,k} = \langle t_{i,k}, \phi_{j+1,p} \rangle_2,
\end{equation*}
and
\begin{equation*}
f_i = \sum_{n=-N}^{N} v_{n,i} \phi_{j,n}, \qquad v_{n,i} = \langle f_i, \phi_{j,n} \rangle_2.
\end{equation*}
Since $\{ \phi_{j+1,m} \}_{m \in \mathbb{Z}}$ is an orthonormal basis of $V_{j+1}$, we reformulate \eqref{kernel_convoluzione_nullo} in the following way 
\begin{equation}
\label{kernel_convoluzione_nullo_2}
    \sum_{i=1}^{c} \sum_{n = -N}^{N} \Big( \sum_{p = 0}^{\bar{p}} {d}_{p,i,k} \langle {F}_{n,p} , \phi_{j+1,m} \rangle_2 \Big) v_{n,i} = 0, \quad k=1,...,\frac{c}{2}, \quad m \in \mathbb{Z},
\end{equation}
where ${F}_{n,p} := \phi_{j,n} \ast \phi_{j+1,p}$. We need to show that $v_{n,i} = 0$ for every $i = 1,...,c$ and every $n = -N,...,N$. Setting
\begin{equation}
\label{A}
A_{(m,k),(n,i)} := \sum_{p =0}^{\bar{p}} {d}_{p,i,k} \langle {F}_{n,p} , \phi_{j+1,m} \rangle_2,     
\end{equation}
equation~\eqref{kernel_convoluzione_nullo_2} becomes
\begin{equation}
\label{kernel_convoluzione_nullo_3}
    \sum_{i=1}^{c} \sum_{n =-N}^N A_{(m,k),(n,i)} v_{n,i} = 0, \quad k=1,...,\frac{c}{2}, \quad m \in \mathbb{Z}.
\end{equation}
We observe that $\langle {F}_{n,p} , \phi_{j+1,m} \rangle_2$ depends only on the scaling function $\phi$, the scale $j$ and the coefficient $m-p-2n$ as show in \eqref{eta_compatto}, i.e.\ $\langle {F}_{n,p} , \phi_{j+1,m} \rangle_2 = 2^{-\frac{j}{2}} \eta_1(m-p-2n)$ with $\eta_1$ defined in \eqref{eta}.
Since $\phi$ is compactly supported, $\eta_1$ has a finite number of non-zero entries, namely $\eta_1 \in c_{00}(\Z)$, and, by Hypothesis~\ref{hyp_filtri}, the same holds for $d_{\cdot,i,k}$ (suitably extended by $0$ outside its support).
This allows us to rewrite \eqref{A} as 
\begin{equation*}
A_{(m,k),(n,i)} = 2^{-\frac{j}{2}} \sum_{p \in \Z} {d}_{p,i,k} \eta_1(m-p-2n) = 2^{-\frac{j}{2}} ({d}_{\cdot,i,k} \ast \eta) (m-2n) = 2^{-\frac{j}{2}} \hspace{0.05cm} d^\eta_{i,k}(m-2n),   
\end{equation*}
where $d^\eta_{i,k} := ({d}_{\cdot,i,k} \ast \eta_1) \in c_{00}(\Z)$ and $\ast$ represents the discrete convolution defined in \eqref{ast}. 

Then \eqref{kernel_convoluzione_nullo_3} is equivalent to
\begin{equation*}
\sum_{i=1}^{c} \sum_{n \in \mathbb{Z}} d^\eta_{i,k}(m-2n) v_{n,i} = 0, \qquad m \in \mathbb{Z}, \quad k=1,...,\frac{c}{2},
\end{equation*}
where $v_{\cdot,i} \in c_{00}(\Z)$.
Therefore
\begin{equation}
\label{kernel_fourier}
\sum_{i=1}^{c} \sum_{n,m \in \mathbb{Z}} d^\eta_{i,k}(m-2n) v_{n,i} e^{-2\pi i \xi m}= 0, \qquad \text{ a.e.\ } \xi \in [0,1].
\end{equation}
Rewriting \eqref{kernel_fourier} as 
\begin{equation*}
    \sum_{i=1}^{c} \sum_{n \in \mathbb{Z}} \bigg( \sum_{m \in \mathbb{Z}} d^\eta_{i,k}(m-2n) e^{-2\pi i \xi (m-2n)} \bigg) v_{n,i} e^{-2\pi i \xi 2n} = 0
\end{equation*}
and defining the Fourier series of $(g(r))_{r \in \mathbb{Z}} \in \ell^2(\Z)$ as in \eqref{eq:def_fourier}, we obtain
\begin{equation*}
    \sum_{i=1}^{c} \widehat{d^\eta_{i,k}}(\xi) \widehat{v_i}(2\xi)= 0, \qquad \text{ a.e.\ } \xi \in [0,1],
\end{equation*}
where $v_i(n) := v_{n,i}$. Applying the Convolution Theorem, we obtain
\begin{equation}
\label{kernel_fourier_2}
\sum_{i=1}^{c} {\hat{\eta}_1}(\xi) \widehat{d_{i,k}}(\xi) \widehat{v_i}(2\xi)= 0, \qquad \text{ a.e.\ } \xi \in [0,1],   
\end{equation}
where $d_{i,k}(p) := d_{p,i,k}$.

Thanks to Lemma~\ref{reformulation_eta}, equation~\eqref{kernel_fourier_2} is equivalent to 
\begin{equation*}
\sum_{i=1}^{c} \widehat{d_{i,k}}(\xi) \widehat{v_i}(2\xi)= 0, \qquad \text{ a.e.\ } \xi \in [0,1], \qquad k = 1,...,\frac{c}{2}.
\end{equation*}  
We can rewrite this condition on the coefficients, by writing the definition of the Fourier series and using the orthonormality of $\{ e^{2\pi i \xi r} \}_{r \in \mathbb{Z}}$, which gives 
\begin{equation}
\label{non_splittata}
\sum_{i=1}^{c} \sum_{n \in \mathbb{Z}} d_{m-2n,i,k} v_{n,i} = 0, \qquad m \in \mathbb{Z}, \qquad k = 1,...,\frac{c}{2}.
\end{equation} 
Considering  the odd and the even entries of ${d}_{\cdot,i,k}$ separately, we can split \eqref{non_splittata} in this way 
\begin{equation}
\label{kernel_splittato}
\sum_{i=1}^{c} \sum_{n \in \mathbb{Z}} {\tilde{d}}_{m-n,i,k} v_{n,i} = 0, \qquad m \in \mathbb{Z}, \qquad k = 1,...,c,
\end{equation}
where 
\begin{equation*}
{\tilde{d}}_{p,i,k} :=
    \begin{cases}
      d_{2p,i,k} & k = 1,...,\frac{c}{2},\\
      d_{2p+1,i,k-\frac{c}{2}} & k = \frac{c}{2}+1,...,c.
    \end{cases}
\end{equation*}
We notice that in \eqref{kernel_splittato} we have $k=1,...,c$, while previously $k=1,...,\frac{c}{2}$. This is due to the splitting operation that \textit{doubles} the equations \eqref{non_splittata}. Now the indices $i$ and $k$ take values in the same range.

Without loss of generality, let us assume $\bar{p}$ odd. Using again Hypothesis~\ref{hyp_filtri}, we have that ${\tilde{d}}_{p,i,k} = 0$ if $p \notin \{ 0,...,\frac{\bar{p} - 1}{2} \}$ and $v_{n,i} = 0$ if $n \notin \{ -N,...,N \}$. So, we can rewrite \eqref{kernel_splittato} in a compact form as ${\tilde{D}} v = 0$, where ${\tilde{D}}$ is a matrix of size $c(2 N+\frac{\bar{p}+1}{2}) \times c(2 N+1)$ of the form
\begin{equation}
\label{D_tilde}
{\tilde{D}} = 
\begin{pmatrix}
{\tilde{D}}_{0} & \cdots & 0 & \cdots & 0 & \cdots & 0 \\
\vdots  & \ddots & \vdots &  & \vdots &  & \vdots  \\
{\tilde{D}}_{\frac{\bar{p}-1}{2}} & \cdots & {\tilde{D}}_{0} & \cdots & 0 & \cdots & 0\\
\vdots  & \ddots & \vdots & \ddots & \vdots &  & \vdots  \\
0 & \cdots & {\tilde{D}}_{\frac{\bar{p}-1}{2}} & \cdots & {\tilde{D}}_{0} & \cdots & 0 \\
\vdots &  & \vdots & \ddots & \vdots& \ddots & \vdots \\
0 & \cdots & 0 & \cdots & {\tilde{D}}_{\frac{\bar{p}-1}{2}} & \cdots & {\tilde{D}}_{0} \\
\vdots &  & \vdots & & \vdots& \ddots & \vdots \\
0 & \cdots & 0 & \cdots & 0 & \cdots & {\tilde{D}}_{\frac{\bar{p}-1}{2}} \\
\end{pmatrix},
\end{equation}
where ${\tilde{D}}_{p}$ is the $c \times c$ matrix defined by $({\tilde{D}}_{p})_{i,k} := {\tilde{d}}_{p,i,k}$, and $v$ is a vector of size $c(2N+1)$ such that
\begin{equation*}
    v = \begin{pmatrix}
v_{-N} \\
\vdots \\
v_N \\
\end{pmatrix},
\end{equation*}
where $v_n$ is a vector of size $c$ such that $(v_n)_{i} := v_{n,i}$. To clarify the block structure of ${\tilde{D}}$, the $(m,n)$ block of size $c \times c$ is ${\tilde{D}}_{m-n}$, defined as $({\tilde{D}}_{m-n})_{i,k} := {\tilde{d}}_{m-n,i,k}$, if $0 \leq m-n \leq \frac{\bar{p}-1}{2}$ and the zero matrix otherwise, where $n=-N,...,N$ and $m=-N,...,N+\frac{\bar{p}-1}{2}$. 
We observe that the matrix $D^l$ defined in \eqref{D_0} corresponds to $\tilde{D}_{0}$ defined above and by Hypothesis~\ref{hyp_filtri} $\det(D^l) = \det(\tilde{D}_{0}) \neq 0$. Therefore, the rank of ${\tilde{D}}$ is maximum, since the determinant of the $c(2N+1) \times c(2N+1)$ block-triangular matrix 
\begin{equation*}
\begin{pmatrix}
{\tilde{D}}_{0} & \cdots & 0 & \cdots & 0 & \cdots & 0 \\
\vdots  & \ddots & \vdots &  & \vdots &  & \vdots  \\
{\tilde{D}}_{\frac{\bar{p}-1}{2}} & \cdots & {\tilde{D}}_{0} & \cdots & 0 & \cdots & 0\\
\vdots  & \ddots & \vdots & \ddots & \vdots &  & \vdots  \\
0 & \cdots & {\tilde{D}}_{\frac{\bar{p}-1}{2}} & \cdots & {\tilde{D}}_{0} & \cdots & 0 \\
\vdots &  & \vdots & \ddots & \vdots& \ddots & \vdots \\
0 & \cdots & 0 & \cdots & {\tilde{D}}_{\frac{\bar{p}-1}{2}} & \cdots & {\tilde{D}}_{0} \\
\end{pmatrix}
\end{equation*}
is not zero. Hence, $v_{n,i} = 0$ for every $n \in \Z$ and $i=1,...,c$. \\

\paragraph*{Step 2: \textit{Injectivity of $\tilde{\sigma} \big|_{W}$}} Let $W = W_{l}$, where $W_{l}$ is defined in Lemma~\ref{shared_compact_support_each_layer}.
Therefore, there exists $N \in \N$ such that $W = (\spann \{ \phi_{j,n} \}_{n=-N}^{N})^c$.
We prove that $\sigma_W := P_{W} \circ \sigma \big|_{W} \colon W \to W$ is injective. This implies that $\tilde{\sigma} \big|_{W} = P_{(V_j)^c} \circ \sigma \big|_{W} \colon W \to (V_j)^c$ is injective. Without loss of generality, set $c=1$. 

To do this, we use Theorem~\ref{Hadamard_thm}. By Hypothesis~\ref{hyp_scaling_spaces}, we have $W \subset L^2(\R) \cap L^{\infty}(\R)$. Thanks to Lemma~\ref{shared_compact_support_each_layer}, we can identify any function $f \in W$ with a vector $y \in \mathbb{R}^{2N+1}$ whose entries are $\langle f, \phi_{j,n} \rangle_2$ for $n=-N,...,N$. 
Therefore, $\sigma_W$ can be seen as a map
\begin{equation*}
\sigma_W \colon \mathbb{R}^{2N+1} \to \mathbb{R}^{2N+1}.
\end{equation*}

To prove the injectivity of $\sigma_W$, we verify the conditions of Theorem~\ref{Hadamard_thm}.

\begin{enumerate}
    \item The function $\sigma_W \in C^1(W)$, because it is the composition of a linear function, the projection $P_{W}$, and the nonlinearity, $\sigma$, which is of class $C^1$ thanks to Lemma~\ref{derivata_di_sigma}. 
    \item We now show that $\det(J_{\sigma_W}(y)) \neq 0$. Using $g \in W$ instead of the corresponding vector $y \in \mathbb{R}^{2N+1}$, we observe that
    \begin{equation*}
    J_{\sigma_W}(g) = (P_{W} \circ \sigma') (g),   
    \end{equation*}
    thanks to the linearity of the projection. In order to show that
\begin{equation*}
\det(J_{\sigma_W}(g)) \neq 0  
\end{equation*}
for every $g \in W$, by Lemma~\ref{derivata_di_sigma} we need to prove that $P_{W}(\sigma'(g(\cdot)) f) = 0$ a.e.\ implies $f = 0$ a.e. Thanks to the fact that $f \in W$, we have
\begin{equation*} 
 \langle \sigma'(g(\cdot)) f, f \rangle_2 = 0.
\end{equation*}
But $\sigma'(x) > 0$ for every $x \in \mathbb{R}$, and so the last equation implies $f = 0$ a.e.
    \item We now verify that $|\sigma_W(y)|_{\mathbb{R}^{2N+1}} \to + \infty$ as $|y |_{\mathbb{R}^{2N+1}} \to + \infty$. We observe that $|y |_{\mathbb{R}^{2N+1}} \to + \infty$ implies that $\| f \|_2 \to + \infty$,  where $f \in W$ has been identified with $y$. 
        Moreover, 
\begin{equation*}
    \| \sigma_W(f) \|_2 = \| (P_{W} \circ \sigma)(f) \|_2 \geq \langle \sigma(f), \frac{f}{\| f \|_2} \rangle_2,
\end{equation*}
because $\frac{f}{\| f \|_2} \in W$ with unitary norm. Using the fact that $x \cdot \sigma(x) \geq 0$ for every $x \in \R$ (see Condition~\ref{cond_2} of Hypothesis~\ref{hyp_sigma}), we have
\begin{equation*}
    \langle \sigma(f), \frac{f}{\| f \|_2} \rangle_2 = \frac{1}{\| f \|_2} \int_{\mathbb{R}} \sigma(f(x)) f(x) dx = \frac{1}{\| f \|_2} \int_{\mathbb{R}} |\sigma(f(x))| |f(x)| dx.
\end{equation*}

Consider $x \geq 0$. Applying the mean value theorem to $\sigma$ in the interval $[0, x]$ we have that $\sigma(x) = \sigma'(\xi) x$ with $\xi \in [0,x]$. Thanks to the fact that $\sigma'(x) \geq M_1$ for every $x \in \R$ (see Condition~\ref{cond_1} of Hypothesis~\ref{hyp_sigma}), we obtain that $|\sigma(x)| \geq M_1 |x|$ for every $x \geq 0$. The same holds for $x < 0$. Therefore, 
\begin{equation*}
\frac{1}{\| f \|_2} \int_{\mathbb{R}} |\sigma(f(x))| |f(x)| dx \geq \frac{M_1}{\| f \|_2} \int_{\R} (f(x))^2 dx = M_1 \| f \|_2.
\end{equation*}
Summing up, we have
\begin{equation*}
    \|\sigma_W(f) \|_2 \geq M_1 \| f \|_2 \to + \infty, \qquad \| f \|_2 \to + \infty,
\end{equation*}
and then we obtain the thesis.
\end{enumerate}
\end{proof}

\begin{remark}
    As we can see in the first step of the proof, the second condition in Hypothesis~\ref{hyp_filtri} is sufficient but not necessary. It is indeed enough to ask that the rank of $\tilde{D}$, defined in \eqref{D_tilde}, is maximum.
  
    Moreover, we observe that the maximum rank condition may not hold if we have $\det(\tilde{D}_{0}) = 0$, but $\det(\tilde{D}_{p}) \neq 0$ for some $p \in \{ 1,...,\bar{p}-1 \}$. Indeed, if we consider the matrix
    \begin{equation*}
    \begin{pmatrix}
    A & 0 & 0 \\
    I & A & 0 \\
    A & I & A \\
    0 & A & I \\
    0 & 0 & A \\
    \end{pmatrix}
    \end{equation*}
    where $0 \in \mathcal{M}_{2 \times 2}$ is the zero matrix, $I \in \mathcal{M}_{2 \times 2}$ is the identity matrix and $A \in \mathcal{M}_{2 \times 2}$ is such that
    \begin{equation*}
    \begin{pmatrix}
    1/\sqrt{2} & 0 \\
    0 & 0
    \end{pmatrix},
    \end{equation*}
    it is easy to verify that the rank is not maximum.
\end{remark}

\subsection{Proof of Theorem~\ref{lipschitz_stab}}\label{app:lip}

We first prove that the Fr\'echet derivative of an injective CGNN is injective as well.

\begin{prop}
\label{differentiable_manifold}
Let $X = L^2(\R)$ and $G$ be a CGNN satisfying the hypotheses of Theorem~\ref{main_thm}.
Then the generator $G$ is injective, of class $C^1$ and $G'(z)$ is injective for every $z \in \mathbb{R}^{\Esse}$.
\end{prop}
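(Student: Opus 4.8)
The plan is to handle the three assertions separately. The injectivity of $G$ is precisely the content of Theorem~\ref{main_thm}, so no new argument is needed there. For the $C^1$ regularity and the injectivity of $G'(z)$, the crucial structural fact is that, by Lemma~\ref{shared_compact_support_each_layer}, the output of each layer lands in a fixed \emph{finite-dimensional} subspace $W_l = (\spann\{\phi_{j_l,n}\}_{n=-N^l}^{N^l})^{c_l}$. Consequently I would view $G$ as a composition of maps between finite-dimensional spaces,
\[
\R^{\Esse} \xrightarrow{\Psi_1} W_1 \xrightarrow{\tilde\sigma_1} W_1 \xrightarrow{\tilde\Psi_2} W_2 \xrightarrow{\tilde\sigma_2} \cdots \xrightarrow{\tilde\Psi_L} W_L \xrightarrow{\tilde\sigma_L} W_L,
\]
and it suffices to analyze each factor on the relevant finite-dimensional domain.

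For the $C^1$ claim I would proceed layer by layer and then invoke the chain rule. The maps $\Psi_1$ and $\tilde\Psi_l$ are affine (a convolution followed by the linear projection), hence smooth, with derivatives equal to their linear parts. For $\tilde\sigma_l = P_{(V_{j_l})^{c_l}} \circ \sigma_l$, I would restrict to $W_l$: since $W_l$ is a finite-dimensional subspace of $L^2(\R)^{c_l}\cap L^\infty(\R)^{c_l}$, Lemma~\ref{derivata_di_sigma} gives that $\sigma_l$ is Fr\'echet differentiable and of class $C^1$ there, and composing with the continuous linear projection preserves $C^1$ regularity. Since a finite composition of $C^1$ maps is $C^1$, it follows that $G\in C^1$, with
\[
G'(z) = \tilde\sigma_L'(\,\cdot\,) \circ \tilde\Psi_L^{\mathrm{lin}} \circ \tilde\sigma_{L-1}'(\,\cdot\,) \circ \cdots \circ \tilde\sigma_1'(\,\cdot\,) \circ F,
\]
where each factor is evaluated at the image of $z$ under the preceding layers.

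To prove that $G'(z)$ is injective for every $z$, I would show that \emph{each} linear factor in this composition is injective on the appropriate subspace; injectivity of the composition then follows immediately. The factor $F = \Psi_1'$ is injective by Hypothesis~\ref{hyp_f_c}. The injectivity of the linear part $\tilde\Psi_l^{\mathrm{lin}}$ restricted to $W_{l-1}$ is exactly what is established in Step~1 of the proof of Theorem~\ref{main_thm}: there the bias terms cancel in the difference $f = h-g$, so that argument in fact proves the injectivity of the linear map, i.e.\ of the derivative. Finally, the injectivity of $\tilde\sigma_l'(g)$ for $g\in W_l$ is exactly the non-vanishing $\det(J_{\sigma_W}(g))\neq 0$ verified in Step~2 of that proof, which used only $\sigma'>0$ (Hypothesis~\ref{hyp_sigma}); the evaluation point $g$ is the image of $z$ under the preceding layers, which lies in $W_l$ by Lemma~\ref{shared_compact_support_each_layer}, so the Jacobian does not vanish there.

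The main obstacle is the bookkeeping of the passage between the infinite-dimensional ambient spaces $(V_{j_l})^{c_l}$ and the invariant finite-dimensional subspaces $W_l$, so that Lemma~\ref{derivata_di_sigma} legitimately applies at every layer and the chain rule may be used in finite dimensions. Once that reduction is in place, both remaining claims follow by reusing, essentially verbatim, the linearized injectivity arguments already contained in Steps~1 and~2 of the proof of Theorem~\ref{main_thm}.
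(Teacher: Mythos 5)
Your proposal is correct and follows essentially the same route as the paper: both decompose $G'(z)$ via the chain rule into the factors $F$, the linear parts of $\tilde\Psi_l$, and $\tilde\sigma_l'$, and then reuse Steps~1 and~2 of the proof of Theorem~\ref{main_thm} (together with Hypothesis~\ref{hyp_f_c} and Lemmas~\ref{shared_compact_support_each_layer} and~\ref{derivata_di_sigma}) to get injectivity of each factor on the invariant finite-dimensional subspaces $W_l$. Your treatment of the $C^1$ claim is in fact slightly more detailed than the paper's one-line justification.
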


\begin{proof}
The injectivity of $G$ is proved in Theorem~\ref{main_thm}, and the continuous differentiability of $G$ follows from the fact that it is a composition of continuously differentiable functions. We only need to prove the injectivity of $G'$. Let $W_{l}$ be defined as in Lemma~\ref{shared_compact_support_each_layer}, for $l=1,\dots,L$.
The derivative of $G$ can be written as
	\begin{equation} \label{G'}
	G' = \left(\comp_{l=L}^2 (P_{(V_{j_{l}})^{c_{l}}} \circ \sigma_{l})' (P_{(V_{j_{l}})^{c_{l}}} \circ \Psi_l)' \right) (P_{(V_{j_1})^{c_1}} \circ \sigma_1)' F',
	\end{equation}
	where, for simplicity, we omitted the arguments of each term.
    The injectivity of $G'(z)$ for every $z \in \mathbb{R}^{\Esse}$ follows from the injectivity of each component of \eqref{G'}. Indeed, $F' = F$ is injective for every $z \in \R^{\Esse}$ by Hypothesis~\ref{hyp_f_c}. 
    Moreover, for every $l=2,...,L$, $(P_{(V_{j_{l}})^{c_{l}}} \circ \Psi_l)' \big|_{{W}_{l-1}} = P_{(V_{j_{l}})^{c_{l}}} \circ \Psi_{l} \big|_{{W}_{l-1}}$ is injective, as we prove in Step $1$ of the proof of Theorem~\ref{main_thm}. 
    
 Finally, we prove that $\tilde{\sigma}'_l \big|_{{W}_{l}}(g) = (P_{(V_j)^c} \circ \sigma_l \big|_{{W}_{l}})'(g)$ is injective for every $l=1,...,L$. To do this, we observe that a stronger condition holds i.e.\ $\sigma'_{{W}_{l}}(g) := (P_{{W}_{l}} \circ \sigma_l\big|_{{W}_{l}})'(g)$ is injective for every $l=1,...,L$. Indeed, the deteminant of its Jacobian is not zero for every $g \in {W}_{l}$, as shown in Step $2$ of the proof of Theorem~\ref{main_thm}.
\end{proof}

We are then able to prove a Lipschitz stability estimate for \eqref{IPG}, when $G$ is an injective CGNN.

\begin{proof}[Proof of Theorem~\ref{lipschitz_stab}]
Thanks to Proposition~\ref{differentiable_manifold}, $G$  is injective, of class $C^1$ and $G'(z)$ is injective for every $z \in \mathbb{R}^{\Esse}$. Therefore,  $\mathcal{M} = G(\R^{\Esse})$  is  a $\Esse$-dimensional differentiable manifold embedded in $X = L^2(\R)$, considering as atlas the one formed by only one chart $\{ \mathcal{M}, G^{-1} \}$. Moreover, $\mathcal{M}$ is a Lipschitz manifold, since $G$ is a composition of Lipschitz maps. Indeed, it is composed by affine maps and Lipschitz nonlinearities (see Hypothesis~\ref{hyp_sigma}). Then the result immediately follows from  \cite[Theorem $2.2$]{Alberti_Santacesaria_Arroyo}.
\end{proof}

The same result can be obtained also in the case of non-expansive convolutional layers and arbitrary stride, under the hypotheses of Theorem~\ref{cor_expansive_stride}. Moreover, it can be extended to the $2$D case under the hypotheses of Theorem~\ref{cor_2D}.

\section{Conclusions}

In this work, we have introduced CGNNs, a family of generative models in the continuous, infinite-dimensional, setting, generalizing popular architectures such as DCGANs \cite{DCGAN}. We have shown that, under natural conditions on the weights of the networks and on the nonlinearity, a CGNN is globally injective. This allowed us to obtain a Lipschitz stability result for (possibly nonlinear) ill-posed inverse problems, with unknowns belonging to the manifold generated by a CGNN.

The main mathematical tool used is wavelet analysis and, in particular, a multi-resolution analysis of $L^2(\R)$. While wavelets yield the simplest multi-scale analysis, they are suboptimal when dealing with images. So, it would be interesting to consider CGNNs with other systems, such as curvelets \cite{candes2004new} or shearlets \cite{labate2005sparse}, more suited to higher-dimensional signals, or, more generally, CGNNs made of an injective discrete neural network composed with a synthesis operator not necessarily associated to a multi-scale structure.

For simplicity, we considered only the case of a smooth nonlinearity $\sigma$: we leave the investigation of Lipschitz $\sigma$'s to future work. This would allow for including many commonly used activation functions. 
Some simple illustrative numerical examples are included in this work, which was mainly focused on the theoretical properties of CGNNs. It would be interesting to perform more extensive numerical simulations in order to better evaluate the performance of CGNNs, also  with nonlinear inverse problems, such as electrical impedance tomography.

The stability estimate obtained in Theorem~\ref{lipschitz_stab} is based on a more general result that holds also with manifolds that are not described by one single generator \cite{Alberti_Santacesaria_Arroyo}. A possible direction to extend our work is combining multiple generators to learn several charts of a manifold, as done in \cite{alberti2023manifold} for the finite-dimensional case.

The output of a CGNN is always in a certain space $V_j$ of a MRA and, as such, is as smooth as the scaling function. As a result, this architecture is in general not well-suited for classes of irregular signals. It would be very interesting to design and study architectures  that can generate more irregular, e.g. discontinuous, signals in functions spaces. This would probably require exploiting the full depth of the wavelet MRA, in order to capture arbitrarily fine scales, and  keeping  at the same time a low-dimensional latent space, or going beyond multi-scale representations of signals.

\section*{Acknowledgments} 
This material is based upon work supported by the Air Force Office of Scientific Research under award number FA8655-20-1-7027. Co-funded by the European Union (ERC, SAMPDE, 101041040). Views and opinions expressed are however those of the authors only and do not necessarily reflect those of the European Union or the European Research Council. Neither the European Union nor the granting authority can be held responsible for them. The authors are members of the ``Gruppo Nazionale per l’Analisi Matematica, la Probabilità e le loro Applicazioni'', of the ``Istituto Nazionale di Alta Matematica''.

%

\bibliographystyle{abbrv}
\bibliography{biblio}

\appendix

\section{Appendix}

\subsection{Discrete strided convolutions}
\label{proof_adjoint}
A fractional-strided convolution $\Psi \colon (\R^{s \alpha})^{c_{\text{in}}} \to (\R^{\alpha})^{c_{\text{out}}}$ with stride $s$ such that $\alpha s,s^{-1} \in \N^*$, $c_{\text{in}}$ input channels and $c_{\text{out}}$ output channels is defined by
\begin{equation*}
    (\Psi x)_k := \sum_{i=1}^{c_{\text{in}}} x_i \ast_{s} t_{i,k} + {b}_k,\qquad k = 1,...,c_{\text{out}},
\end{equation*}
where $t_{i,k} \in \mathbb{R}^{\alpha}$  are the convolutional filters and ${b}_k \in \mathbb{R}^{\alpha}$  are the bias terms, for $i=1,...,c_{\text{in}}$ and $k=1,...,c_{\text{out}}$.
The operator $\ast_s$ is defined in \eqref{deconv_discr} as
\begin{equation}
(x \ast_{s} t) (n) := \sum_{m \in \Z} x(m) \hspace{0.1cm} t(n-s^{-1}m),
\end{equation}
where we extend the signals $x$ and $t$ to finitely supported sequences by defining them  zero outside their supports, i.e.\ $x,t \in c_{00}(\Z)$, where $c_{00}(\Z)$ is the space of sequences with finitely many nonzero elements.
We can rewrite \eqref{deconv_discr} in the following way:
\begin{equation}
\label{deconv_discr_2}
(x \ast_{s} t) (n) = (x \ast t_r)(k),
\end{equation}
where $n = s^{-1}k +r$ with $r \in \{ 0,...,s^{-1}-1 \}$, $k \in \mathbb{Z}$ and $t_r = t(s^{-1} \cdot +r)$ for every $r=0,...,s^{-1}-1$. 
The symbol $\ast$ represents the discrete convolution
\begin{equation}
\label{ast}
x \ast t := \displaystyle \sum_{m \in \mathbb{Z}} x(m) \hspace{0.1cm} t(\cdot - m), \qquad  x,t \in c_{00}(\Z).
\end{equation}
The output $x \ast_{s} t$ belongs to $c_{00}(\Z)$.
We motivate \eqref{deconv_discr} by taking the adjoint of the strided convolution with stride $s^{-1} \in \N^*$, as we explain below.

Equation~\eqref{deconv_discr_2} is useful to interpret Hypothesis~\ref{hyp_filtri} on  the convolutional filters (Section~\ref{sec:injectivity}).
We observe that in our case the convolution is well defined since the signals we consider have a finite number of non-zero entries. However, in general, it is enough to require that $x \in \ell^p$ and $t \in \ell^q$ with $\frac{1}{p}+\frac{1}{q} = 1$ to obtain a well-defined discrete convolution.

We now want to justify \eqref{deconv_discr}. We compute the adjoint of the convolutional operator $A_{s^{-1},t}$ with stride $s^{-1} \in \N^*$ and filter $t$, which is defined as $A_{s^{-1},t} \hspace{0.1cm} x = x \ast_{s^{-1}} t$, where
\begin{equation}
    \label{conv_discr}
     (x \ast_{s^{-1}} t) (n) := (x \ast t)(s^{-1} n) = \sum_{m \in \Z} x(m) \hspace{0.1cm} t(s^{-1}n-m).
\end{equation}
As before, the signals $x$ and $t$ are seen as elements of $c_{00}(\Z)$ by extending them to zero outside their supports and the symbol $\ast$ denotes the discrete convolution defined in \eqref{ast}.

The adjoint of $A_{s^{-1},t}$, $A^*_{s^{-1},t}$, satisfies
\begin{equation}
\label{adjoint}
    \langle A^*_{s^{-1},t} \hspace{0.1cm} y, x  \rangle_2 = \langle y, A_{s^{-1},t} \hspace{0.1cm} x  \rangle_2,
\end{equation}
where $\langle \cdot, \cdot \rangle_2$ is the scalar product on $\ell^2$. Using \eqref{adjoint}, we find that
\begin{equation}
\label{A^*}
    (A^*_{s^{-1},t} \hspace{0.1cm} y)(n) = \sum_{m \in \Z} y(m) \hspace{0.1cm} t(s^{-1}m-n).
\end{equation}
However, in order to be consistent with the definition in \eqref{conv_discr} when $s=1$, we do not define the fractionally-strided convolution as the adjoint given by \eqref{A^*}, but as in \eqref{deconv_discr}.

Figure~\ref{fig_strides} presents a graphical illustration of three examples of strided convolutions with different strides: $s=1$ in Figure~\ref{stride_1}, $s=2$ in Figure~\ref{stride_2}, and $s=\frac{1}{2}$ in Figure~\ref{stride_1_2}. The input vector $x$ and the filter $t$ have a finite number of non-zero entries indicated with yellow and orange squares/rectangles, respectively, and the output $x \ast_s t$ has a finite number of non-zero entries indicated with red squares/rectangles. For simplicity, we identify the infinite vectors in $\R^{\Z}$ with vectors in $\R^{N}$ where $N$ is the number of their non-zero entries. Given the illustrative purpose of these examples, for simplicity we ignore boundary effects. The signals' sizes are:
\begin{enumerate}[label=(\alph*)]
\item $s = 1$, input vector $x \in \mathbb{R}^8$, output vector $x \ast_1 t \in \mathbb{R}^8$;
\item $s = 2$, input vector $x \in \mathbb{R}^8$, output vector $x \ast_2 t \in \mathbb{R}^4$;
\item $s = \frac{1}{2}$, input vector $x \in \mathbb{R}^8$, output vector $x \ast_{\frac{1}{2}} t \in \mathbb{R}^{16}$.
\end{enumerate}

\begin{figure}
\centering
\subfloat[Discrete convolution with stride $s = 1$ between an input vector $x \in \mathbb{R}^{8}$ (in yellow) and a filter $t \in \mathbb{R}^{3}$ (in orange), which gives as output a vector $x \ast_{1} t \in \mathbb{R}^{8}$ (in red). The input and output sizes are the same.]{\label{stride_1}\includegraphics[width=0.40\textwidth]{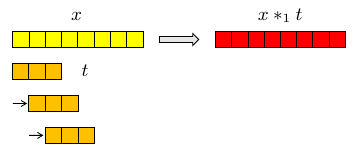}} 
\hfill
\subfloat[Discrete convolution with stride $s = 2$ between an input vector $x \in \mathbb{R}^{8}$ (in yellow) and a filter $t \in \mathbb{R}^{3}$ (in orange), which gives as output a vector $x \ast_{2} t \in \mathbb{R}^{4}$ (in red). The output size is half the input size.]{\label{stride_2}\includegraphics[width=0.40\textwidth]{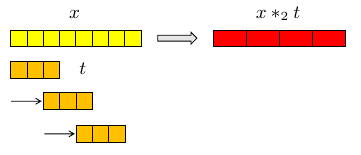}} \\
\subfloat[Intuition of the convolution with stride $s = \frac{1}{2}$ between an input vector $x \in \mathbb{R}^{8}$ (in yellow) and a filter $t \in \mathbb{R}^{3}$ (in orange), which gives as output a vector $x \ast_{\frac{1}{2}} t \in \mathbb{R}^{16}$ (in red). The output size is twice the input size.]{\label{stride_1_2}\includegraphics[width=0.50\textwidth]{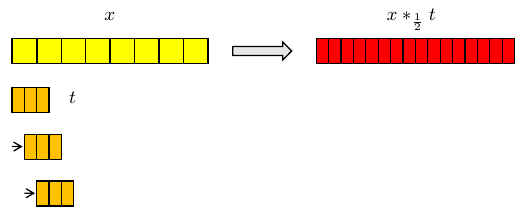}}
\caption{Discrete convolutions with strides $s = 1$ and $s=2$ and intuitive interpretation of convolution with stride $s = \frac 1 2$.}
\label{fig_strides}
\end{figure}

When the stride is an integer, equation~\eqref{conv_discr} describes what is represented in Figures~\ref{stride_1} and \ref{stride_2}. When the stride is $s = \frac{1}{2}$, as depicted in Figure~\ref{stride_1_2}, it is intuitive to consider a filter whose entries are half the size of the input ones. This is equivalent to choosing the filter in a space of higher resolution with respect to the space of the input signal. As a result, the output belongs to the same higher resolution space. For instance, the filter belongs to a space that is twice the resolution of the input space when the stride is $\frac{1}{2}$. This notion of resolution is coherent with the scale parameter used in the continuous setting.

In fact, Figure~\ref{stride_1_2} does not represent equation~\eqref{deconv_discr} exactly when $s=\frac12$. However the illustration is useful to model the fractional-strided convolution in the continuous setting.
A more precise illustration of the $\frac 1 2$-strided convolution of equation~\eqref{conv_discr} is presented in Figure~\ref{stride_1_2_real}.

\begin{figure}
\centering 
\includegraphics[width=\textwidth]{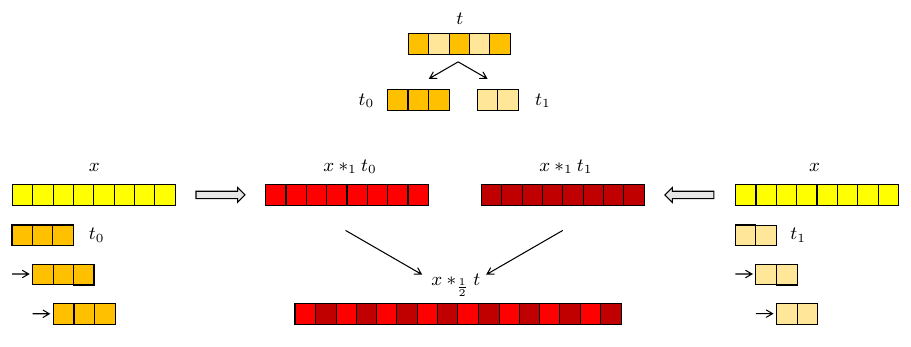} 
\caption{Discrete convolution with stride $s = \frac{1}{2}$ between an input vector $x \in \mathbb{R}^{8}$ (in yellow) and a filter $t \in \mathbb{R}^{3}$ (in orange), which gives as output a vector $ x \ast_{\frac{1}{2}} t \in \mathbb{R}^{16}$ (in red). The filter $t$ is first divided into two \textit{sub-filters}, $t_0$ and $t_1$, containing the odd and even entries, respectively. Then, a convolution with stride $1$ between the input vector and each \textit{sub-filter} is performed. Finally, the two output vectors are reassembled to form the final output vector.}
\label{stride_1_2_real}
\end{figure}

\subsection{Extensions}
\label{sec:extension}
In Section~\ref{sec:injectivity} we considered the following simplified framework:
\begin{enumerate}
    \item non-expansive convolutional layers;
    \item stride $s = \frac{1}{2}$ for each convolutional layer;
    \item one-dimensional signals.
\end{enumerate}
In the following subsections we extend our theory by weakening these assumptions.

\subsubsection{Expansive convolutional layers and arbitrary stride}
\label{sec:expansive_layer_and_arbitrary_stride}
In Section~\ref{sec:injectivity}, we considered the case where the stride is $s=\frac 1 2$ for each layer and where the number of channels scaled exactly by the factor $s$ at each layer, i.e.\ $c_{l} = s c_{l-1}$.
Here, we generalize Theorem~\ref{main_thm} to the case of  stride $s_{l} = \frac{1}{2^{\nu_{l}}}$, with $\nu_{l} \in \mathbb{N}$, for the $l$-th convolutional layer and by considering expansive layers, i.e.\ $c_{l} \geq s c_{l-1}$. In this case, $D^l$ are not necessarily square matrices, so we need to impose a condition on their rank.

\begin{hyp}
\label{hyp_filtri_0}
Let $\bar p\in\N$. For every $l=2,...,L$, the convolutional filters $t^{l}_{i,k} \in V_{j_{l}}$ of the $l$-th convolutional layer \eqref{psi_ell} satisfy    \begin{equation}
    t^{l}_{i,k} = \sum_{p=0}^{ \bar{p}} d^{l}_{p,i,k} \phi_{j_{l},p}, \qquad i=1,...,c_{l-1}, \qquad k=1,...,c_{l},
    \end{equation}
    where $d^{l}_{p,i,k} \in \R$, and the rank of ${D}^{l}$ is maximum, where ${D}^{l}$ is a $2^{\nu_l} c_{l} \times c_{l-1}$ matrix defined by
\begin{equation}
\label{tilde_d_p_i_k_0}
({D}^{l})_{i,k} :=
    \begin{cases}
      d^{l}_{0,i,k} & k = 1,...,c_{l}, \\
      d^{l}_{1,i,k-c_{l}} & k = c_{l}+1,...,2 c_{l}, \\
      d^{l}_{2,i,k-2c_{l}} & k = 2 c_{l}+1,...,3 c_{l}, \\
      \vdots \\
      d^{l}_{2^{\nu_l}-1,i,k-(2^{\nu_l}-1)c_{l}} & k = (2^{\nu_l} -1)c_{l}+1,...,2^{\nu_l} c_{l}.
    \end{cases}
\end{equation}
\end{hyp}

\begin{thm}
\label{cor_expansive_stride}
Let $L \in \mathbb{N}^*$ and $j_1 \in \mathbb{Z}$. Let $c_1,\dots,c_L\in \N^*$ and $\nu_2,\dots,\nu_L\in\N$ such that $c_L=1$, $c_{l-1}$ is divisible by $2^{\nu_{l}}$ and $c_{l} \geq \frac{c_{l-1}}{2^{\nu_{l}}}$. Set $j_{l} = j_{l-1} +\nu_{l}$ for every $l=2,...,L$. Let  $V_{j_{l}}$ be the spaces of an MRA and $t^{l}_{i,k} \in V_{j_{l}}$ for every $l=2,...,L$, $i=1,...,c_{l-1}$ and $k=1,...,c_{l}$. Let $\tilde{\Psi}_{l}$ and $\tilde{\sigma}_{l}$ be defined as in \eqref{tilde_psi} and \eqref{tilde_sigma}, respectively. Let $\Psi_1$ be defined as in \eqref{eq:Psi1}.
If Hypotheses~\ref{hyp_scaling_spaces},  \ref{hyp_sigma}, \ref{hyp_f_c} and \ref{hyp_filtri_0} are satisfied, then the generator $G$ defined in \eqref{continuous_generator} is injective.
\end{thm}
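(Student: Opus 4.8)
The plan is to follow the same three-part decomposition used for Theorem~\ref{main_thm}, adapting only the step concerning the convolutional layers, since neither the treatment of the fully connected layer nor that of the nonlinearities is affected by the stride or by the expansiveness of the channels. As before, it suffices to show that $\Psi_1$ is injective (immediate from Hypothesis~\ref{hyp_f_c}), that the restriction of each $\tilde\sigma_l$ to the image of the previous layer is injective, and that the restriction of each $\tilde\Psi_l$ to that image is injective; the injectivity of $G$ in \eqref{continuous_generator} then follows by composition. I would first note that Lemma~\ref{shared_compact_support_each_layer} still applies: its proof uses only the compact support of the filters (the decomposition \eqref{decomposition_filter}, which is common to Hypotheses~\ref{hyp_filtri} and \ref{hyp_filtri_0}) and the pointwise action of $\sigma$, so the image of each layer still lies in a space of the form $W_l=(\spann\{\phi_{j_l,n}\}_{n=-N^l}^{N^l})^{c_l}$. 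Consequently Step~2 of the proof of Theorem~\ref{main_thm}, i.e.\ the injectivity of $\tilde\sigma_l|_{W_l}$ via Lemma~\ref{derivata_di_sigma} and Hadamard's theorem (Theorem~\ref{Hadamard_thm}), carries over verbatim, as it depends only on the finite-dimensional reduction and on Hypotheses~\ref{hyp_scaling_spaces} and \ref{hyp_sigma}.

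The substance of the argument is the injectivity of $\tilde\Psi_l|_{W_{l-1}}$ for a layer with stride $2^{-\nu_l}$ and output scale $j_l=j_{l-1}+\nu_l$. I would reduce, exactly as in Step~1, to showing that the coefficient vector $v=(v_{n,i})$ of $f=h-g$ vanishes whenever $P_{V_{j_l}}\bigl(\sum_i t^l_{i,k}\ast f_i\bigr)=0$ for all $k$. By \eqref{eta_compatto}, the relevant inner products are now $\langle\phi_{j_{l-1},n}\ast\phi_{j_l,p},\phi_{j_l,m}\rangle_2 = 2^{-j_{l-1}/2}\eta_{\nu_l}(m-p-2^{\nu_l}n)$, involving $\eta_{\nu_l}$ rather than $\eta_1$, and passing to Fourier series gives, for a.e.\ $\xi\in[0,1]$,
\begin{equation*}
\sum_{i=1}^{c_{l-1}} \hat\eta_{\nu_l}(\xi)\,\widehat{d_{i,k}}(\xi)\,\widehat{v_i}(2^{\nu_l}\xi) = 0, \qquad k=1,\dots,c_l.
\end{equation*}
To cancel the common scalar factor $\hat\eta_{\nu_l}(\xi)$ I need the analogue of Lemma~\ref{reformulation_eta} for arbitrary $\nu_l$, namely $\hat\eta_{\nu_l}(\xi)\neq 0$ for a.e.\ $\xi$. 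Since $\phi$ is compactly supported, $\eta_{\nu_l}\in c_{00}(\Z)$ and $\hat\eta_{\nu_l}$ is analytic, so it suffices to check that $\eta_{\nu_l}\not\equiv 0$; this follows from the identity $\sum_{r}\eta_{\nu_l}(r)=\bigl(\int_\R\phi\bigr)^3$, together with the fact that an orthonormal MRA scaling function satisfies $\int_\R\phi\neq 0$ (equivalently $|\widehat\phi(0)|=1$) and the partition-of-unity identity $\sum_n\phi(\cdot-n)=\int_\R\phi$.

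After dividing out $\hat\eta_{\nu_l}$ and returning to the coefficients, the kernel condition becomes $\sum_i\sum_n d^l_{m-2^{\nu_l}n,i,k}\,v_{n,i}=0$. Splitting $m$ according to its residue modulo $2^{\nu_l}$, the general-stride analogue of the odd/even splitting in \eqref{kernel_splittato}, rewrites the system as $\tilde D v=0$, where $\tilde D$ is a banded block matrix whose diagonal block is precisely the $2^{\nu_l}c_l\times c_{l-1}$ matrix $D^l$ of \eqref{tilde_d_p_i_k_0}. Because the lowest row-block $m=-N$ involves only $v_{-N}$ and equals $D^l v_{-N}$, the maximality of the rank of $D^l$ assumed in Hypothesis~\ref{hyp_filtri_0} (full column rank, since $2^{\nu_l}c_l\ge c_{l-1}$) forces $v_{-N}=0$; a forward substitution over $m=-N,\dots,N$ then yields $v=0$. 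This replaces the square block-triangular determinant computation of Theorem~\ref{main_thm} and is exactly what accommodates the expansive, rectangular case in which $D^l$ is tall.

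I expect the main obstacle to be the extension of Lemma~\ref{reformulation_eta}: one must certify that $\hat\eta_{\nu_l}\neq 0$ a.e.\ for every stride $\nu_l$ while Hypothesis~\ref{hyp_scaling_spaces} only assumes \eqref{eta_diverso_0} for $\eta_1$, which is why the identity $\sum_r\eta_{\nu_l}(r)=\bigl(\int_\R\phi\bigr)^3$ and the nonvanishing of $\int_\R\phi$ are the key facts to establish. The bookkeeping of the $2^{\nu_l}$-fold residue splitting and the verification that the banded matrix $\tilde D$ inherits full column rank from $D^l$ are the remaining delicate points, but both are handled by the forward-substitution scheme described above.
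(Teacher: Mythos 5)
Your proof is correct and follows the same overall strategy as the paper's, which for this theorem is only a two-sentence sketch asserting that the arguments of Theorem~\ref{main_thm} carry over. Your write-up, however, supplies two details that the sketch glosses over and that are genuinely needed. First, for a layer with stride $2^{-\nu_l}$ and $\nu_l\ge 2$, the Fourier factorization produces the factor $\hat\eta_{\nu_l}$, whereas Hypothesis~\ref{hyp_scaling_spaces} and Lemma~\ref{reformulation_eta} only control $\hat\eta_1$; your identity $\sum_{r}\eta_{\nu}(r)=\bigl(\int_\R\phi\,dx\bigr)^3$, obtained from the partition of unity $\sum_n\phi(\cdot-n)=\hat\phi(0)$ a.e.\ and $|\hat\phi(0)|=1$ for an orthonormal MRA scaling function, closes this gap cleanly --- and, as a bonus, applied with $\nu=1$ it proves condition \eqref{eta_diverso_0} unconditionally, something Remark~\ref{matlab_code_eta} states the authors only verified numerically. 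Second, in the expansive case $D^l$ is rectangular, so the square block-triangular determinant computation of Theorem~\ref{main_thm} no longer applies verbatim; your forward substitution, starting from the extremal row-block of equations (those $m$ for which only $n=-N$ contributes, which read $D^l v_{-N}=0$) and using the full column rank guaranteed by Hypothesis~\ref{hyp_filtri_0}, is the correct replacement and is exactly what accommodates $2^{\nu_l}c_l\ge c_{l-1}$. Both adaptations are sound, and the remaining steps (the fully connected layer, Lemma~\ref{shared_compact_support_each_layer}, and the Hadamard argument for $\tilde\sigma_l$) indeed transfer unchanged as you claim.
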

\begin{proof}[Sketch of the proof]
The proof of the injectivity of $\tilde{\sigma}$ is the same as in Theorem~\ref{main_thm}. Moreover, the injectivity of $\tilde{\Psi}$ is guaranteed by Hypothesis~\ref{hyp_filtri_0} by following the same argument used in the proof of Theorem~\ref{main_thm}.
\end{proof}

We observe that, thanks to equation~\eqref{stride_e_j}, choosing $j_{l} = j_{l-1} + \nu_{l}$ is equivalent to imposing that the stride of the $l$-th convolutional layer is $s_{l} = \frac{1}{2^{\nu_{l}}}$ with $\nu_{l} \in \mathbb{N}$.

\subsubsection{The two-dimensional case}
\label{sec:2D}

\paragraph{2D wavelet analysis}\label{sec:2dwave}
From \cite[Section $7.7$]{Mallat}, 
we recall the principal concepts of $2$D wavelet analysis. In $2$D, the scaling function spaces become $V_j \otimes V_j$ with $j \in \mathbb{Z}$, with orthonormal basis given by $\{ \phi_{j,(n_1,n_2)} \}_{(n_1,n_2) \in \mathbb{Z}^2}$, where
\begin{equation*}
    \phi_{j,(n_1,n_2)}(x_1,x_2) = \phi_{j,n_1}(x_1) \phi_{j,n_2}(x_2),
\end{equation*}
and $\{ \phi_{j,n} \}_{n \in \mathbb{Z}}$ is an orthonormal basis of $V_j$. We recall that $$V\otimes W=\overline{\spann\{f\otimes g:f\in V,g\in W\}},$$ where $(f\otimes g)(x)=f(x_1)g(x_2)$.
As in $1$D, the MRA properties of  Definition~\ref{MRA} hold:
\begin{enumerate}
    \item $\displaystyle \bigcup_{j \in \mathbb{Z}} (V_j\otimes V_j)$ is dense in $L^2(\mathbb{R}^2) \cong L^2(\R) \otimes L^2(\R)$ and $\displaystyle \bigcap_{j \in \mathbb{Z}} (V_j\otimes V_j) = \{ 0 \}$;
	\item $f \in V_j \otimes V_j$ if and only if $f(2^{-j} \cdot, 2^{-j} \cdot) \in V_0 \otimes V_0$;
	\item and $\{ \phi_{0,(n_1,n_2)} \}_{n_1,n_2 \in \mathbb{Z}}$ is an orthonormal basis of $V_0 \otimes V_0$.
\end{enumerate}
Moreover, $V_j \otimes V_j \subset V_{j+1} \otimes V_{j+1}$ for every $j \in \mathbb{Z}$, as in $1$D.

\paragraph{2D CGNNs}

We begin by specifying the structure of the nonlinearities, the fully connected layer and the convolutional ones in the $2$D setting.

The nonlinearities $\sigma_l$ act on functions in $L^2(\R^2)^{c_{l}}$ by pointwise evaluation:
\begin{equation*}
\sigma_l(f)(x) = \sigma_l(f(x)),\qquad \text{a.e.\ $x\in \mathbb{R}^2$}.
\end{equation*}

The fully connected layer is defined as $\Psi_1:=F\cdot +b$, where $F\colon\mathbb{R}^\Esse\to (V_{j_1} \otimes V_{j_1})^{c_1}$ is a linear map and $b\in (V_{j_1} \otimes V_{j_1})^{c_1}$.

The convolutional layers are
\begin{equation*}
  \bar\Psi_l = P_{(V_{j_{l}} \otimes V_{j_{l}})^{c_{l}}}\circ \Psi_{l} \circ P_{(V_{j_{l-1}} \otimes V_{j_{l-1}})^{c_{l-1}}},
\end{equation*}
where the convolution  $\Psi_l \colon L^2(\R^2)^{c_{l-1}}\to L^2(\R^2)^{c_{l}}$ is given by
\begin{equation}
\label{psi_ell_2}
  (\Psi_{l} (x))_k := \sum_{i=1}^{c_{l-1}} x_i * t^{l}_{i,k} + {b}^{l}_k,\qquad k = 1,...,c_{l},
\end{equation}
with filters $t^{l}_{i,k}\in (V_{j_{l}} \otimes V_{j_{l}})\cap L^1(\R^2)$ and biases ${b}^{l}_k\in V_{j_{l}} \otimes V_{j_{l}}$. 
In \eqref{psi_ell_2}, the symbol $\ast$ denotes the usual convolution $L^2(\R^2)\times L^1(\R^2)\to L^2(\R^2)$.
Note that, in analogy to the $1$D case, the convolution of a filter in $V_{j + \nu} \otimes V_{j + \nu}$ with a function in $V_j \otimes V_j$ produces a function that does not necessarily belong to $V_{j + \nu} \otimes V_{j + \nu}$. In view of identity~\eqref{stride_e_j}, this corresponds to a stride $s = (\frac{1}{2^{\nu}}, \frac{1}{2^{\nu}})$.

We can finally define the CGNN architecture in $2$D:
\begin{multline*}
G \colon \mathbb{R}^{\Esse} \xrightarrow[\text{f.c.}]{\Psi_1} (V_{j_1} \otimes V_{j_1})^{c_1} \xrightarrow[\text{nonlin.}]{\sigma_1} L^2(\R^2)^{c_1} \xrightarrow[\text{proj.}]{P_{(V_{j_1} \otimes V_{j_1})^{c_1}}} (V_{j_1} \otimes V_{j_1})^{c_1} \xrightarrow[\text{conv.}]{\Psi_2} L^2(\R^2)^{c_2} \\  \xrightarrow[\text{proj.}]{P_{(V_{j_2} \otimes V_{j_2})^{c_2}}} (V_{j_2} \otimes V_{j_2})^{c_2} 
\xrightarrow[\text{nonlin.}]{\sigma_2} L^2(\R^2)^{c_2}  \xrightarrow[\text{proj.}]{P_{(V_{j_2} \otimes V_{j_2})^{c_2}}} (V_{j_2} \otimes V_{j_2})^{c_2}  \xrightarrow[\text{conv.}]{\Psi_3}  ... \\ ... \xrightarrow[\text{conv.}]{\Psi_{L}} L^2(\R^2)  \xrightarrow[\text{proj.}]{P_{V_{j_L} \otimes V_{j_L}}} V_{j_L} \otimes V_{j_L}  
\xrightarrow[\text{nonlin.}]{\sigma_L} L^2(\R^2) \xrightarrow[\text{proj.}]{P_{V_{j_L} \otimes V_{j_L}}} V_{j_L} \otimes V_{j_L},
\end{multline*}
which can be summarized as
\begin{equation}
\label{continuous_generator_2D}
    G= \left( \comp_{l=L}^2 \tilde{\sigma}_{l} \circ \tilde\Psi_l\right)
    \circ \left(\tilde{\sigma}_1 \circ \Psi_1 \right),
\end{equation}
where
\begin{equation}
\label{tilde_psi_2D}
\tilde{\Psi}_{l} := P_{(V_{j_{l}} \otimes V_{j_{l}})^{c_{l}}} \circ \Psi_{l}\colon (V_{j_{l-1}} \otimes V_{j_{l-1}})^{c_{l-1}}\to (V_{j_{l}} \otimes V_{j_{l}})^{c_{l}}, \qquad l=2,...,L,  
\end{equation}
and
\begin{equation}
\label{tilde_sigma_2D}
\tilde{\sigma}_{l} := P_{(V_{j_{l}} \otimes V_{j_{l}})^{c_{l}}} \circ \sigma_l \colon (V_{j_{l}} \otimes V_{j_{l}})^{c_{l}} \to (V_{j_{l}} \otimes V_{j_{l}})^{c_{l}}, \qquad l=1,...,L.     
\end{equation}

For simplicity, we state our injectivity result using non-expansive convolutional layers and stride $(\frac{1}{2}, \frac{1}{2})$ for each convolutional layer. The result can be extended to arbitrary strides and expansive layers by adapting the arguments in \ref{sec:expansive_layer_and_arbitrary_stride}.

\begin{hyp}
\label{hyp_filtri_3}
Let $\bar{p} \in \N$. For every $l=2,...,L$, the convolutional filters $t^{l}_{i,k} \in V_{j_{l}} \otimes V_{j_{l}}$ of the $l$-th convolutional layer satisfy 
    \begin{equation*}
    t^{l}_{i,k} = \sum_{p_1=0}^{\bar{p}} \sum_{p_2=0}^{\bar{p}} d^{l}_{p,i,k} \phi_{j_{l},p},
    \end{equation*}
where $d^{l}_{p,i,k} \in \R$ and $p = (p_1,p_2)$, and $\det({D}^{l}) \neq 0$, where ${D}^{l}$ is a $\frac{c_1}{2^{2l-2}} \times \frac{c_1}{2^{2l-2}}$ matrix defined by
\begin{equation*}
    ({D}^{l})_{i,k} :=
    \begin{cases}
      d^{l}_{(0,0),i,k}, & k = 1,...,\frac{c_1}{2^2 2^{l-1}}, \\
      d^{l}_{(1,0),i,k-\frac{c_1}{2^2 2^{l-1}}}, & k = \frac{c_1}{2^2 2^{l-1}} +1 ,...,\frac{c_1}{2 2^{l-1}}, \\
      d^{l}_{(0,1),i,k-\frac{2 c_1}{2^2 2^{l-1}}}, & k = \frac{c_1}{2 2^{l-1}} + 1,...,\frac{3 c_1}{2^2 2^{l-1}}, \\
      d^{l}_{(1,1),i,k-\frac{3 c_1}{2^2 2^{l-1}}}, & k = \frac{3 c_1}{2^2 2^{l-1}}+ 1,...,\frac{c_1}{2^{l-1}}.
    \end{cases}
\end{equation*}
\end{hyp}

\begin{hyp}
\label{hyp_f_c_1}
We assume that
\begin{itemize}
    \item The linear function $F \colon \R^{\Esse} \to (V_{j_1})^{c_1} \otimes (V_{j_1})^{c_1}$ is injective;
    \item There exists $N \in \mathbb{N}$ such that $b \in (\spann\{ \phi_{j_1,n} \}_{n_1,n_2=-N}^{N})^{c_1}$ and \\$\Im(F) \subset (\spann\{ \phi_{j_1,n} \}_{n_1,n_2 = -N}^{N})^{c_1}$ with $n = (n_1,n_2)$. 
\end{itemize}
\end{hyp}

\begin{thm}
\label{cor_2D}
Let $L \in \mathbb{N}^*$ and $j_1 \in \mathbb{Z}$. Let $c_1 = 2^{2 L - 2}$, $c_{l} = \frac{c_1}{2^{2 l - 2}}$ and $j_{l} = j_1 + l - 1$ for every $l=1,...,L$. Let $V_{j_{l}}$ be scaling function spaces arising from an MRA, as defined in $\S$\ref{sec:2dwave}, and $t^{l}_{i,k} \in V_{j_{l}} \otimes V_{j_{l}}$ for every $l=2,...,L$, $i=1,...,c_{l-1}$ and $k=1,...,c_{l}$. Let $\tilde{\Psi}_{l}$ and $\tilde{\sigma}_{l}$ be defined as in \eqref{tilde_psi_2D} and \eqref{tilde_sigma_2D}, respectively. Let $F \colon \R^{\Esse} \to (V_{j_1})^{c_1} \otimes (V_{j_1})^{c_1}$ be a linear map and $b \in (V_{j_1})^{c_1} \otimes (V_{j_1})^{c_1}$.
If Hypotheses~\ref{hyp_scaling_spaces},  \ref{hyp_sigma}, \ref{hyp_filtri_3} and \ref{hyp_f_c_1} are satisfied, then the generator $G$ defined in \eqref{continuous_generator_2D} is injective.
\end{thm}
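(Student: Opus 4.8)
The plan is to mirror the layer-by-layer argument used for Theorem~\ref{main_thm}, since the $2$D generator \eqref{continuous_generator_2D} has exactly the same compositional structure as \eqref{continuous_generator}. First I would note that $\Psi_1 = F\cdot + b$ is injective directly by the first part of Hypothesis~\ref{hyp_f_c_1}. It then suffices to show that the restriction of each $\tilde\Psi_l$ to the image of the previous layer is injective for $l=2,\dots,L$, and that each $\tilde\sigma_l$ is injective for $l=1,\dots,L$; injectivity of $G$ follows because a composition of injective maps is injective. As a preliminary step I would establish the $2$D analogue of Lemma~\ref{shared_compact_support_each_layer}: using Hypotheses~\ref{hyp_filtri_3} and \ref{hyp_f_c_1}, the image of every layer consists of functions supported on a common finite set of tensor translates $\phi_{j_l,(n_1,n_2)}$ with $-N^l\le n_1,n_2\le N^l$. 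This lets me identify each relevant subspace $W$ with a finite-dimensional coefficient space indexed by a box in $\Z^2$.

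The injectivity of $\tilde\sigma_l$ is essentially unchanged from the $1$D case. The nonlinearity still acts pointwise, now on $L^2(\R^2)^{c_l}$, and Lemma~\ref{derivata_di_sigma} applies verbatim to any finite-dimensional $W\subset (L^2(\R^2)\cap L^\infty(\R^2))^{c_l}$, giving $\tilde\sigma_l\in C^1(W)$ with Fréchet derivative $f\mapsto \sigma'(g(\cdot))f$. Identifying $W$ with a finite-dimensional Euclidean space, I would verify the two hypotheses of Hadamard's theorem (Theorem~\ref{Hadamard_thm}) exactly as before: the Jacobian $P_W\circ\sigma'(g)$ is nonsingular because $\langle\sigma'(g(\cdot))f,f\rangle_2=0$ together with $\sigma'>0$ forces $f=0$, and coercivity $\|\tilde\sigma_l(f)\|_2\ge M_1\|f\|_2\to+\infty$ follows from sign preservation and $\sigma'\ge M_1$. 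None of these steps uses the dimension of the underlying domain, so they carry over to $\R^2$ with only notational changes.

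The substance of the proof is the injectivity of the $2$D convolutional layer $\tilde\Psi_l$, and this is where I expect the real work. The crux is a tensorized version of the key identity \eqref{eta_compatto}. Since $\phi_{j,(n_1,n_2)}=\phi_{j,n_1}\otimes\phi_{j,n_2}$ and the planar convolution factors coordinatewise, I would compute $\langle \phi_{j,n}\ast\phi_{j+1,p},\phi_{j+1,m}\rangle_2 = 2^{-j}\,\eta_1(m_1-p_1-2n_1)\,\eta_1(m_2-p_2-2n_2)$, so the one-dimensional factor from \eqref{eta_compatto} appears once per coordinate. Writing the kernel condition $P_{(V_{j+1}\otimes V_{j+1})^{c_l}}\big(\sum_i t_{i,k}\ast f_i\big)=0$ in the orthonormal basis and passing to the Fourier series on $\Z^2$, the common factor $\hat\eta_1(\xi_1)\hat\eta_1(\xi_2)$ appears; this is nonzero for a.e.\ $(\xi_1,\xi_2)\in[0,1]^2$ by Lemma~\ref{reformulation_eta}, hence may be cancelled. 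This reduces the kernel condition to one involving only the filter coefficients $d^l_{p,i,k}$ and the unknown coefficients $v_{n,i}$.

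The combinatorial heart is the splitting step. In $1$D the coefficients were separated by the parity of $m-2n$, doubling the equations; in $2$D I would split by the joint parity of $(m_1,m_2)$, producing the four classes $(\text{even},\text{even})$, $(\text{odd},\text{even})$, $(\text{even},\text{odd})$, $(\text{odd},\text{odd})$. This quadruples the equations and aligns the combined output index with the input index $i$ over $\{1,\dots,c_{l-1}\}$, matching the four-case definition of $D^l$ in Hypothesis~\ref{hyp_filtri_3}. The resulting homogeneous system has a two-level block-Toeplitz structure (one level per spatial coordinate), from which one extracts a square block-triangular submatrix whose diagonal block equals $D^l$; since $\det D^l\neq 0$, this submatrix is nonsingular, forcing $v=0$ and hence $f=0$. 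The main obstacle I anticipate is purely bookkeeping: making the four-way parity indexing consistent with Hypothesis~\ref{hyp_filtri_3} and verifying that the $2^2$-fold splitting preserves the triangular block pattern so that $\det D^l\neq 0$ still yields maximal rank. The analytic inputs — the tensor factorization of $\eta_1$, the nonvanishing of $\hat\eta_1$, and Hadamard's theorem — are all either immediate or already available from the $1$D case.
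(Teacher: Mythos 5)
Your proposal is correct and follows essentially the same route as the paper, which proves Theorem~\ref{cor_2D} by a one-line reduction to the argument of Theorem~\ref{main_thm} with indices $n,m,p\in\Z^2$ and $\xi\in[0,1]^2$. You have in fact supplied more detail than the paper does — in particular the tensor factorization of $\eta_1$, the cancellation of $\hat\eta_1(\xi_1)\hat\eta_1(\xi_2)$, and the four-way parity splitting matching Hypothesis~\ref{hyp_filtri_3} — and these details are all accurate.
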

\begin{proof}[Sketch of the proof]
The proof follows from the same arguments of the proof of Theorem~\ref{main_thm}, by considering $n = (n_1,n_2) , m = (m_1,m_2) , p = (p_1,p_2) \in \mathbb{Z}^2$ and $\xi = (\xi_1,\xi_2) \in [0,1]^2$. 
\end{proof}

\end{document}